\documentclass[lettersize,journal]{IEEEtran}

\usepackage[T1]{fontenc}
\usepackage[utf8]{inputenc}
\usepackage{amsmath,amssymb,amsfonts,amsthm,mathtools}
\usepackage{dsfont}
\usepackage{algorithm}
\usepackage{algpseudocode}
\usepackage{array}
\usepackage{booktabs}
\usepackage{multirow}
\usepackage{threeparttable}
\usepackage[caption=false,font=footnotesize]{subfig}
\usepackage{textcomp}
\usepackage{stfloats}
\usepackage{url}
\usepackage{verbatim}
\usepackage{graphicx}
\usepackage{xcolor}
\usepackage{cite}
\usepackage[colorlinks=true,
linkcolor=blue,
citecolor=blue,
urlcolor=blue]{hyperref}

\def\BibTeX{{\rm B\kern-.05em{\sc i\kern-.025em b}\kern-.08em
    T\kern-.1667em\lower.7ex\hbox{E}\kern-.125emX}}

\newtheorem{lemma}{Lemma}[section]

\title{Semi-Supervised Hyperbolic Hierarchical Clustering with Set-Level Structural Priors}

\author{Junjing Zheng, Xinyu Zhang, Xiangfeng Qiu, Chengliang Song, and Weidong Jiang%
\thanks{The authors are with the College of Electronic Science and Technology, National University of Defense Technology, Changsha 410073, China.}%
\thanks{Corresponding author: Xinyu Zhang (e-mail: zhangxinyu90111@163.com).}%
}

\begin{document}

\maketitle

\begin{abstract}
Semi-supervised hierarchical clustering aims to learn a tree structure consistent with data patterns and user-provided supervision. Supervision is usually given as leaf-level relations, such as pairwise must-link/cannot-link constraints or triplet-wise must-link-before constraints. Although useful for regulating local sample relations, such supervision does not directly indicate which samples should form coherent subtrees. Consequently, the non-leaf structure of the learned tree may deviate from the hierarchical organization preferred by ground-truth labels. To address this limitation, we propose a semi-supervised hyperbolic hierarchical clustering method with set-level structural priors. The main contribution is to introduce sets as basic modeling units for hierarchy learning. Each set denotes samples expected to cohere within a subtree and is induced from leaf-level supervision together with a learned constraint-consistent similarity structure. These sets act as soft structural priors for subtree-level supervision, allowing supervision to guide non-leaf hierarchy formation beyond local leaf-level relations. Specifically, we first learn constraint-consistent embeddings to obtain a reliable set partition, then construct constraint-induced sets and estimate inter-set similarities to form set-level structural priors. Finally, these priors are incorporated into a hyperbolic hierarchy objective for continuous tree optimization. Experiments on eleven benchmark datasets and ablation studies show that the proposed method consistently improves label consistency over representative hierarchical clustering baselines while also enhancing similarity-based tree quality.

\end{abstract}

\begin{IEEEkeywords}
Semi-supervised hierarchical clustering, hyperbolic deep learning, Poincar\'e model.
\end{IEEEkeywords}

\section{Introduction}
Hierarchical clustering (HC) is a fundamental unsupervised learning paradigm for organizing data into multi-level nested structures, which provides richer structural interpretability than flat clustering~\cite{ran2023comprehensive, ren_deep_2024}. However, purely unsupervised HC may produce hierarchies that are inconsistent with domain knowledge or user expectations. Semi-supervised hierarchical clustering (SSHC)~\cite{gonzalez-almagro_semi-supervised_2025, cai_review_2023} mitigates this limitation by integrating prior knowledge into the clustering process. Specifically, such priors are expressed as sample relation constraints that guide the construction of more meaningful dendrograms.

Existing SSHC methods mainly employ pairwise constraints~\cite{miyamoto2010semi, hamasuna2010semi, miyamoto2011constrained} or triplet-wise constraints~\cite{zhao2010hierarchical, kestler2006effects}. Pairwise must-link (ML) and cannot-link (CL) constraints~\cite{SSSE} specify whether two samples should belong to the same or different clusters, and have been widely used in semi-supervised clustering~\cite{semi_multicons,wagstaff2001constrained}. Nevertheless, pairwise constraints do not explicitly encode hierarchical preferences~\cite{bade_creating_2008}, since each constraint only describes a relation between two samples rather than the granularity at which they should be grouped in a hierarchy. To express relative merge orders, triplet-wise must-link-before (MLB) constraints are introduced~\cite{Bade,Bade2013}, which specify that one sample pair should be merged before another pair. Although MLB constraints are more directly related to hierarchical clustering, they are usually harder to obtain than pairwise constraints in practical scenarios. Therefore, when the available supervision is given as sparse pairwise constraints, only a limited number of reliable MLB relations can be constructed. More importantly, both pairwise and triplet-wise constraints still operate on individual leaf nodes and thus provide leaf-level supervision, as illustrated in Fig.~\ref{fig:leaf_level}. They can influence subtree formation only indirectly, and therefore provide limited structural guidance for learning the hierarchy.

Motivated by this observation, we aim to introduce supervision at a higher structural level than individual leaves, referred to as subtree-level supervision. Directly imposing hard subtree constraints is impractical, because the subtree structure is unknown before hierarchical optimization and should remain adjustable during training. To this end, we use the set as a natural carrier of such supervision. In this work, a set is a group of samples induced from available prior knowledge and a learned similarity structure. It is not assumed to be a fixed cluster or a completed subtree during clustering. Instead, it serves as a soft prior indicating that its members are expected to cohere within a subtree in the learned hierarchy. In this sense, the proposed set-level structural priors provide a soft realization of subtree-level supervision. These priors offer dual regularization by encouraging internal coherence within each set and informing boundary relations between different sets during dynamic tree optimization. To operationalize the above idea, we embed the hierarchy formation process in hyperbolic space. Hyperbolic geometry is naturally suited to modeling tree-like structures~\cite{HypHC} and provides a continuous Riemannian manifold where structural priors can be incorporated into gradient-based optimization. Prior work has demonstrated that hyperbolic hierarchical clustering can be formulated as a differentiable objective~\cite{HypHC, HypCSE}. Building on these insights, we propose \emph{Semi-supervised Hyperbolic Hierarchical Clustering with Set-level Structural Priors}, which treats constraint-induced sets as the basic modeling units and optimizes their hierarchical relations on the Poincar\'e model. The core idea is shown in Fig.~\ref{fig:subtree_level}.

Concretely, the proposed method consists of three coordinated stages. First, a novel set-aware representation learning scheme calibrates the embedding space through a combination of unsupervised and semi-supervised losses, jointly considering raw data geometry and known pairwise constraints. This stage provides a constraint-consistent similarity structure for subsequent stages. Second, the hierarchical clustering set construction stage constructs constraint-induced sets using ML-induced closures, mutual $k$-nearest neighbors, and shared-neighbor consistency. It further computes weak-connectivity inter-set similarities, so that the set partition and the inter-set similarities together form set-level structural priors. Third, the set-based hyperbolic hierarchy optimization stage uses these priors in a set-triplet objective by introducing intra-set hyperbolic lowest common ancestor (LCA) representations. The sample embeddings are further optimized on the hyperbolic manifold, and the final tree is decoded from the optimized embeddings. Through these stages, sparse pairwise supervision is organized into set-level structural guidance that supports subtree formation during continuous hierarchy optimization. Our contributions are summarized as follows:

\begin{figure*}[t]
	\centering
	\subfloat[Leaf-level supervision]{%
		\includegraphics[width=0.465\textwidth]{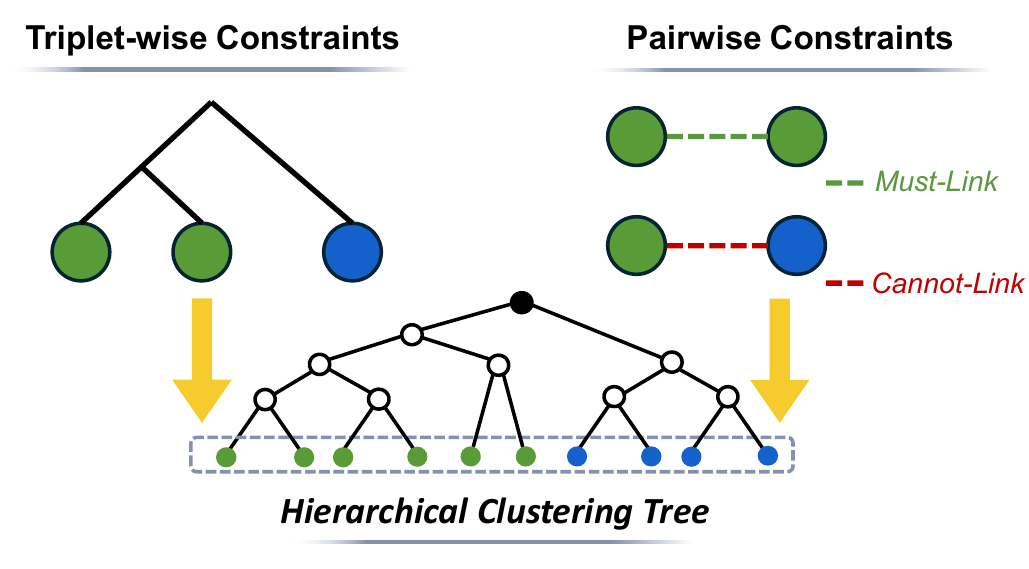}%
		\label{fig:leaf_level}
	}
	\hfil
	\subfloat[Our subtree-level supervision]{%
		\includegraphics[width=0.48\textwidth]{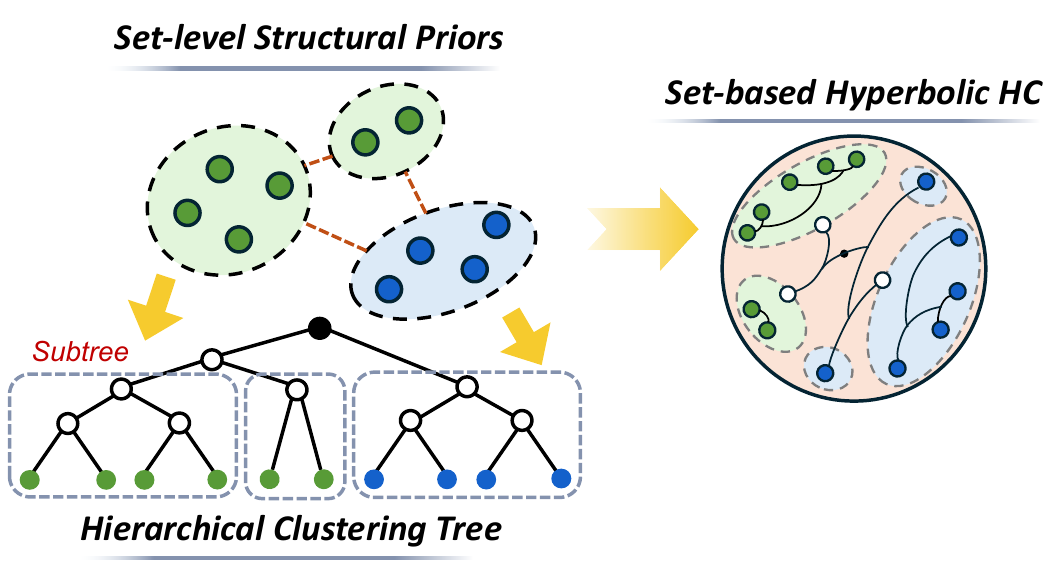}%
		\label{fig:subtree_level}
	}
	\caption{Differences between leaf-level supervision and the proposed subtree-level supervision.}
	\label{fig:pairwise_constraints_mismatch}
\end{figure*}

\begin{itemize}
	\item A constraint-consistent similarity structure is learned for set partitioning.
	The proposed representation learning scheme balances intrinsic data geometry with available pairwise supervision, providing a more reliable similarity structure for set partitioning.
	
	\item Leaf-level supervision is lifted to subtree-level supervision.
	By constructing constraint-induced sets and inter-set similarities, sparse leaf-level relations are organized into soft structural priors for subtree formation beyond local leaf-level relations.
	
	\item Set-level structural priors are incorporated into continuous hyperbolic hierarchy learning.
	The proposed set-based objective extends point-triplet optimization to set-triplet relations, allowing structural priors to guide subtree formation and produce more label-consistent hierarchies.
	
\end{itemize}

\section{Preliminaries}
\subsection{Notations}

Given a dataset $\mathcal{X}=\{\mathbf{x}_1,\mathbf{x}_2,\ldots,\mathbf{x}_n\}$, we formulate hierarchical clustering as learning a rooted binary tree $\mathcal{T}$ with $n$ leaves, where each leaf corresponds to one data point and each internal node corresponds to a cluster. For two leaves $i$ and $j$, their lowest common ancestor (LCA) is denoted by $i\vee j$. The subtree rooted at $i\vee j$ is denoted by $\mathcal{T}[i\vee j]$, and $\text{leaves}(\mathcal{T}[i\vee j])$ denotes its leaf set. For hyperbolic embeddings, $\mathbf{z}_i\vee \mathbf{z}_j$ denotes the hyperbolic LCA of two embedded samples.

We consider semi-supervised hierarchical clustering with pairwise constraints. Let $l_i$ be the cluster label of $\mathbf{x}_i$. The ML and CL constraint sets are denoted by $\mathcal{M}$ and $\mathcal{C}$, respectively, where each $(\mathbf{x}_i,\mathbf{x}_j)\in\mathcal{M}$ satisfies $l_i=l_j$, and each $(\mathbf{x}_i,\mathbf{x}_j)\in\mathcal{C}$ satisfies $l_i\ne l_j$. After feature representation, we construct a set collection $\mathcal{S}=\{S_1,S_2,\ldots,S_P\}$ from the constraints and the learned similarity structure of the latent feature space. The corresponding local structure is encoded by an undirected graph $G=\{V,E,\mathbf{W}\}$, where $V=\{v_1,v_2,\ldots,v_n\}$ contains vertices corresponding to feature points, $E$ is the edge set, and $\mathbf{W}$ stores pairwise similarities. $\mathbf{z}^{\mathrm{intra}}_S$ denotes the intra-set hyperbolic LCA representation of a set $S\in\mathcal{S}$, which is referred to as the intra-set LCA hereafter.

\subsection{Hyperbolic Geometry and Hyperbolic HC}

\subsubsection{Poincar\'e model}
Hyperbolic geometry has constant negative curvature and is well suited for representing tree-like hierarchical structures~\cite{HypHC}. In this work, we use the Poincar\'e model with curvature $-1$, defined as
\begin{equation}
	\mathbb{B}^d=\{\mathbf{x}\in\mathbb{R}^d:\Vert\mathbf{x}\Vert<1\}
\end{equation}
For $\mathbf{x},\mathbf{y}\in\mathbb{B}^d$, the M\"obius addition is defined as~\cite{HypHC}
\begin{equation}
	\mathbf{x}\oplus\mathbf{y}
	=
	\frac{
		(1+2\langle \mathbf{x},\mathbf{y}\rangle+\Vert\mathbf{y}\Vert^2)\mathbf{x}
		+
		(1-\Vert\mathbf{x}\Vert^2)\mathbf{y}
	}{
		1+2\langle \mathbf{x},\mathbf{y}\rangle+\Vert\mathbf{x}\Vert^2\Vert\mathbf{y}\Vert^2
	}.
\end{equation}
The hyperbolic distance between $\mathbf{x}$ and $\mathbf{y}$ is then given by
\begin{equation}
	\label{HyperDist}
	d^{h}(\mathbf{x}, \mathbf{y}) =
	2 \operatorname{artanh}\!\left(\Vert -\mathbf{x}\oplus\mathbf{y}\Vert\right).
\end{equation}
Denoting $\mathbf{o}$ as the origin, it reduces to $d^{h}_{\mathbf{o}}(\mathbf{x})=2\operatorname{artanh}(\Vert \mathbf{x}\Vert)$ when $\mathbf{y}=\mathbf{o}$. To map vectors between Euclidean tangent spaces and the Poincar\'e model, we use the exponential and logarithmic maps:
\begin{equation}
	\label{project}
	\operatorname{exp}_{\mathbf{x}}(\mathbf{u})=\mathbf{x}\oplus\left(\operatorname{tanh}\left(\frac{\lambda_{\mathbf{x}}\Vert \mathbf{u}\Vert}{2}\right)\frac{\mathbf{u}}{\Vert \mathbf{u}\Vert}\right),
\end{equation}
where $\mathbf{u}\in\mathbb{R}^d$ and $\lambda_{\mathbf{x}}=\frac{2}{1-\Vert\mathbf{x}\Vert^2}$, and
\begin{equation}
	\label{projectback}
	\operatorname{log}_{\mathbf{x}}(\mathbf{y})=\frac{2}{\lambda_{\mathbf{x}}}\operatorname{arctanh}(\Vert -\mathbf{x}\oplus\mathbf{y}\Vert)\frac{-\mathbf{x}\oplus\mathbf{y}}{\Vert -\mathbf{x}\oplus\mathbf{y}\Vert},
\end{equation}
where $\mathbf{y}\in\mathbb{B}^d$. In particular, when the base point is the origin, the two maps reduce to
$\operatorname{exp}_{\mathbf{o}}(\mathbf{u})=\operatorname{tanh}\left(\Vert \mathbf{u}\Vert\right)\frac{\mathbf{u}}{\Vert \mathbf{u}\Vert}$
and
$\operatorname{log}_{\mathbf{o}}(\mathbf{y})=\operatorname{arctanh}(\Vert \mathbf{y}\Vert)\frac{\mathbf{y}}{\Vert \mathbf{y}\Vert}$.

\subsubsection{Klein model}
The Klein model $\mathbb{K}^d$ is another representation of hyperbolic space with the same open ball domain as the Poincar\'e model. Its primary advantage in this context is that geodesics are Euclidean straight lines, thereby facilitating the use of convex combinations to compute the intra-set LCA. A point $\mathbf{x}\in\mathbb{B}^d$ can be mapped to the Klein model by
\begin{equation}
	\label{eq:poincare-to-klein}
	\mathbf{k} = \frac{2\mathbf{x}}{1 + \|\mathbf{x}\|^2},
\end{equation}
and a Klein point $\mathbf{k}\in\mathbb{K}^d$ can be mapped back to the Poincar\'e model by
\begin{equation}
	\label{eq:klein-to-poincare}
	\mathbf{x} = \frac{\mathbf{k}}{1 + \sqrt{1 - \|\mathbf{k}\|^2}}.
\end{equation}
These transformations will be used to derive the continuous intra-set LCA in Section~\ref{sec:intra-set_LCA}.

\subsubsection{Hyperbolic Hierarchical Clustering}
Dasgupta's cost provides a global objective for similarity-based hierarchical clustering~\cite{dasgupta}:
$\mathcal{L}_{\text{Dasgupta}}(\mathcal{T};w)=\sum_{ij}w_{ij}\vert \text{leaves}(\mathcal{T}[i \vee j])\vert$.
It favors trees in which similar data points have LCAs farther from the root than dissimilar points. Wang et al.~\cite{triplesDA} reformulate this objective into a triplet-based form, but the resulting optimization remains combinatorial. HypHC~\cite{HypHC} addresses this issue by embedding leaf nodes into hyperbolic space and optimizing a continuous relaxation of the hierarchical clustering objective.
Given hyperbolic embeddings $\mathcal{Z}=\{\mathbf{z}_1,\mathbf{z}_2,\ldots,\mathbf{z}_n\}$, HypHC is defined as
\begin{equation}
	\label{HypHC}
	\begin{aligned}
		\mathcal{L}_{\text{HypHC}}(\mathcal{Z};w,t)
		&= \sum_{i,j,k}\Big(
		w_{ij}+w_{ik}+w_{jk} \\
		&\quad
		- w_{ijk}(\mathcal{Z};w,t)
		\Big)
		+2\sum_{i,j}w_{ij},                                      \\[1mm]
		w_{ijk}(\mathcal{Z};w,t)
		&=(w_{ij},w_{ik},w_{jk})\cdot
		\sigma_{t}\!\left(
		\begin{array}{c}
			d^{h}_{\mathbf{o}}(\mathbf{z}_i\vee \mathbf{z}_j)\\
			d^{h}_{\mathbf{o}}(\mathbf{z}_i\vee \mathbf{z}_k)\\
			d^{h}_{\mathbf{o}}(\mathbf{z}_j\vee \mathbf{z}_k)
		\end{array}
		\right).
	\end{aligned}
\end{equation}
Here, $\mathbf{z}_i \vee \mathbf{z}_j$ denotes the hyperbolic LCA, defined as the point on the geodesic between $\mathbf{z}_i$ and $\mathbf{z}_j$ that is closest to the origin~\cite{HypHC}. The function $\sigma_{t}(\cdot)$ is the scaled softmax with temperature $t>0$.

\subsection{Related Works}

Most SSHC methods based on pairwise constraints follow an agglomerative or graph-based paradigm, and mainly differ in how constraints are translated into merge decisions. COBRA~\cite{van2018cobra} over-clusters data into super-instances and uses pairwise constraints to rule out illegal merges. CECH~\cite{SCECH} directly modifies the similarity matrix by assigning maximum similarity to ML pairs and minimum similarity to CL pairs. Semi-Multicons~\cite{semi_multicons} encodes constraints into a consensus-based objective and seeks a tree satisfying closed-pattern constraints. SSSE~\cite{SSSE} represents pairwise constraints as a relation graph and integrates them into structural entropy minimization, where hierarchical clustering is performed through discrete stretching and compressing operators. These methods can effectively exploit pairwise supervision, but they still mainly introduce limited local corrections to pairwise similarities or merge orders. As a result, supervision is propagated to subtree formation only indirectly. In contrast, our method first learns a more reliable similarity structure through set-aware representation learning, which balances the available supervision with the intrinsic neighborhood structure of the data. It further uses sets as the basic units for hierarchy learning, allowing leaf-level supervision to act as subtree-level guidance and produce more label-consistent hierarchies.

Recently, hyperbolic geometry has been widely used for modeling hierarchical data because of its negative curvature and exponential volume growth~\cite{Nickel2017}. A series of hyperbolic neural models have been developed to support representation learning on hyperbolic manifolds~\cite{HGCN, Liu2019HGNN, Mathieu2019, HMHGNN2025,HBNN2025}, and have been applied to clustering~\cite{HypCSE, MHCN, WAH}, image retrieval~\cite{MixGeo}, and session-based recommendation~\cite{liuenhancing2024}. For hierarchical clustering, HypHC~\cite{HypHC} formulates tree learning as continuous optimization of leaf embeddings in hyperbolic space, while HypCSE~\cite{HypCSE} relaxes structural entropy by using soft assignments over hyperbolic LCAs. These methods demonstrate the effectiveness of continuous hyperbolic optimization, but they are mainly unsupervised and optimize leaf-level hierarchy objectives. As a result, the learned hierarchies may not necessarily align with domain knowledge or user expectations. Our method complements this line of work by introducing sets as the carriers of supervision into hyperbolic HC, so that set-level structural priors can guide hierarchy learning at the upper levels of the tree and produce more label-consistent hierarchies.

\section{Semi-supervised HypHC}

\subsection{Framework Overview}

Given a dataset $\mathcal{X}=\{\mathbf{x}_1,\mathbf{x}_2,\ldots,\mathbf{x}_n\}$ with known pairwise ML and CL constraints $\mathcal{M}$ and $\mathcal{C}$, our goal is to transform sparse leaf-level supervision into soft subtree-level guidance and use it to learn a hyperbolic hierarchy. Pairwise constraints are used as the practical supervision source because they are often easier to obtain than triplet-wise constraints. Technically, sparse leaf-level pairwise constraints are transformed into soft subtree-level supervision carried by constraint-induced sets. These sets do not define fixed subtrees in advance; instead, they provide set-level structural priors that guide subtree formation and merging during continuous hierarchy optimization on the Poincar\'e model.

As shown in Fig.~\ref{overview}, the proposed method follows a preparation--construction--optimization pipeline. First, the set-aware representation learning stage combines unsupervised and semi-supervised losses to learn constraint-consistent hyperbolic embeddings, yielding a constraint-consistent similarity structure for reliable set partition. Second, the hierarchical clustering set construction stage partitions samples into constraint-induced sets and computes inter-set similarities. This is the stage where set-level structural priors are explicitly formed from constraints and the learned similarity structure. Third, the set-based HypHC stage employs the set partition and inter-set similarities in a set-triplet objective to further optimize leaf embeddings on the Poincar\'e model, where intra-set LCAs are computed from geodesic convex hulls as continuous representations of subtree roots. In this way, set-level structural priors are translated into subtree merge preferences, so that the decoded tree better aligns with available supervision.

\begin{figure*}
    \centering
    \includegraphics[
    width=1\linewidth,
    trim=0 5pt 0 12pt,
    clip
    ]{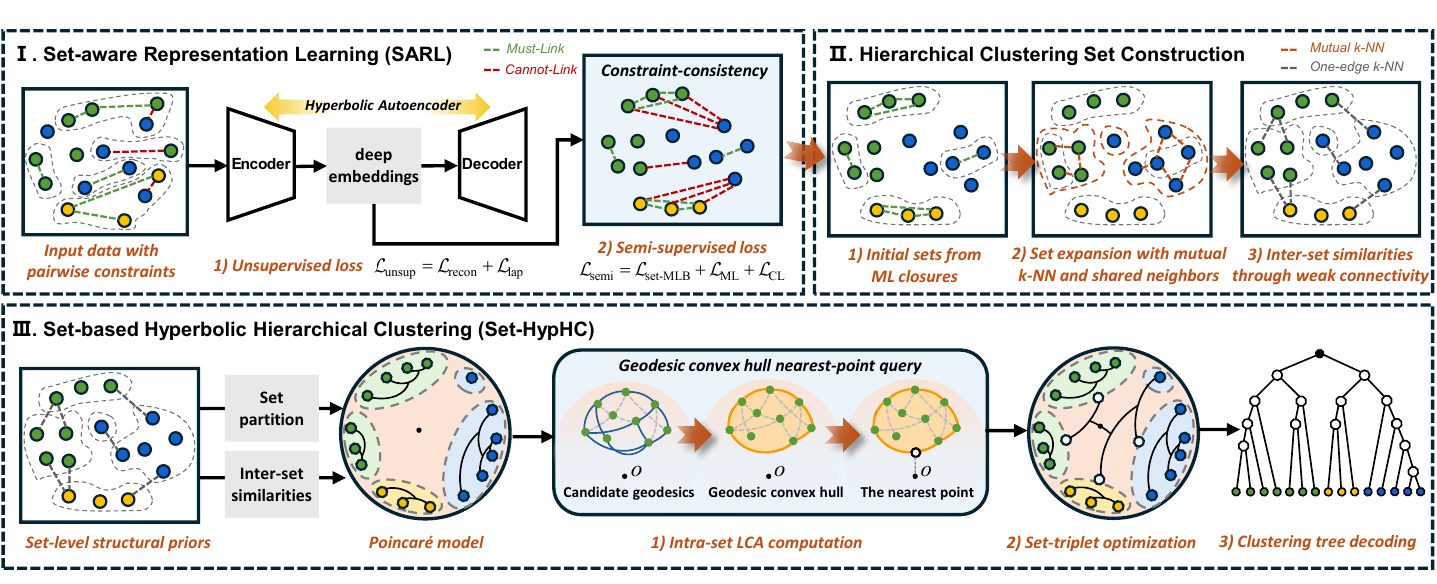}
    \caption{Overall framework of the proposed method.}
    \label{overview}
\end{figure*}

\subsection{Set-Aware Representation Learning}

\subsubsection{Constraint-consistent Hyperbolic Embeddings}

This stage reduces the risk that raw-feature similarities scatter constraint-related samples or introduce noisy local proximity, providing constraint-consistent embeddings. We adopt a hyperbolic 
autoencoder, implemented with a  multi-layer perceptron (MLP)-based Euclidean autoencoder and a Poincar\'e mapping layer. Given an input sample $\mathbf{x}_i$, we use a Euclidean encoder 
$f_{\mathrm{enc}}$ parameterized by $\theta_{\mathrm{enc}}$ to map it to
\begin{equation}
	\mathbf{z}_i^e=f_{\mathrm{enc}}(\mathbf{x}_i;\theta_{\mathrm{enc}}),
\end{equation}
which is then projected onto the Poincar\'e model:
\begin{equation}
	\mathbf{z}_i^h=
	\exp_{\mathbf{o}}(\mathbf{z}_i^e),
\end{equation}
where the constraint-related objectives are imposed. 
The embedding is further mapped back to the tangent space and fed 
into a decoder $f_{\mathrm{dec}}$, which is parameterized by 
$\theta_{\mathrm{dec}}$ and symmetric to the encoder:
\begin{equation}
	\tilde{\mathbf{x}}_i
	=
	f_{\mathrm{dec}}
	\left(
	\log_{\mathbf{o}}(\mathbf{z}_i^h);
	\theta_{\mathrm{dec}}
	\right).
\end{equation}
This design preserves the reconstruction capability of autoencoders while allowing supervision to shape the hyperbolic embedding space. The objective of this stage is
\begin{equation}
	\label{overall loss}
	\mathcal{L}_{\mathrm{rep}}
	=
	\mathcal{L}_{\mathrm{semi}}
	+
	\mathcal{L}_{\mathrm{unsup}},
\end{equation}
where $\mathcal{L}_{\mathrm{semi}}$ encodes supervision, and $\mathcal{L}_{\mathrm{unsup}}$ preserves input semantics and local smoothness.

\subsubsection{Semi-Supervised Terms}
From the available pairwise constraints, we first extract implicit triplet-wise merge preferences from ML/CL closures and formulate a novel set-aware MLB term. Explicit ML contraction and CL separation terms are further introduced as complementary pairwise regularizers, and together they define the semi-supervised loss.

The standard MLB criterion specifies that a sample pair $(\mathbf{x}_i,\mathbf{x}_j)$ should merge earlier than pairs involving another sample $\mathbf{x}_k$, i.e., $d_{ij}<d_{ik}$ and $d_{ij}<d_{jk}$. Taking $\mathbf{x}_i$ as the anchor, this can be written as a ranking objective:
\begin{equation}
	\max \ d_{ik}-d_{ij}
	\Leftrightarrow
	\min \log \left(1+\exp(d_{ij}-d_{ik})\right).
\end{equation}
The last term can be rewritten as
\begin{equation}
	\log \left(1+\exp(d_{ij}-d_{ik})\right)
	=
	-\log
	\frac{\exp(-d_{ij})}
	{\exp(-d_{ij})+\exp(-d_{ik})},
\end{equation}
which corresponds to a logistic ranking form. When $\mathbf{x}_j$ is also used as an anchor, the two MLB inequalities $d_{ij}<d_{ik}$ and $d_{ij}<d_{jk}$ can be jointly encouraged.

Based on this ranking view, we derive a set-aware MLB term from pairwise ML/CL closures. For an anchor $\mathbf{x}_i$, let $\mathcal{M}_i$ and $\mathcal{C}_i$ denote its ML and CL closures derived from $\mathcal{M}$ and $\mathcal{C}$, respectively. These closures provide preliminary supervision for representation learning and are utilized as initial sets in the set construction stage. The set-aware MLB loss is defined as
\begin{equation}
	\label{set-MLB}
	\mathcal{L}_{\text{set-MLB}}
	=
	-\sum_{i\in{\mathcal{A}}}
	\log
	\frac{
		\sum_{j\in\mathcal{M}_i}\exp(-d_{ij})
	}{
		\sum_{j\in\mathcal{M}_i}\exp(-d_{ij})
		+
		\sum_{k\in\mathcal{C}_i}\exp(-d_{ik})
	},
\end{equation}
where $\mathcal{A}$ is the set of anchors with both ML and CL closures. This loss compares the anchor with an ML closure and a CL closure, rather than with two isolated samples. Therefore, it extracts relative merge preferences from pairwise constraints in a more structured and set-aware manner, without requiring explicit triplet-wise annotations.

We further introduce explicit ML contraction and CL separation terms. For the ML part, let $\mathcal{M}^{\text{hard}}$ be the subset of the farthest ML pairs, with $|\mathcal{M}^{\text{hard}}|=r_1|\mathcal{M}|$ and $0<r_1<1$. The ML loss is
\begin{equation}
	\label{ML loss}
	\mathcal{L}_{\mathrm{ML}}
	=
	\frac{1}{|\mathcal{M}^{\text{hard}}|}
	\sum_{(i,j)\in\mathcal{M}^{\text{hard}}}
	d_{ij}^{2}.
\end{equation}
For the CL part, let $\mathcal{A}_{\mathrm{CL}}$ denote anchors that participate only in CL relations, and let $\mathcal{C}^{\text{hard}}$ be the subset of nearest CL samples, with $|\mathcal{C}^{\text{hard}}|=r_2|\mathcal{C}|$ and $0<r_2<1$. The CL loss is
\begin{equation}
	\label{CL loss}
	\mathcal{L}_{\mathrm{CL}}
	=
	\frac{1}{|\mathcal{A}_{\mathrm{CL}}|}
	\sum_{i\in\mathcal{A}_{\mathrm{CL}}}
	\frac{1}{|\mathcal{C}^{\text{hard}}|}
	\sum_{k\in\mathcal{C}^{\text{hard}}}
	\frac{1}{1+d_{ik}^{2}},
\end{equation}
which penalizes CL samples that are excessively close to the anchor. The semi-supervised loss can be given as
\begin{equation}
	\label{semi loss}
	\mathcal{L}_{\mathrm{semi}}
	=
	\mathcal{L}_{\text{set-MLB}}
	+
	w_{\mathrm{ML}}\mathcal{L}_{\mathrm{ML}}
	+
	w_{\mathrm{CL}}\mathcal{L}_{\mathrm{CL}},
\end{equation}
where $w_{\mathrm{ML}}$ and $w_{\mathrm{CL}}$ are balancing coefficients.

Following mixed-geometry representation learning~\cite{MixGeo}, the distance $d_{ij}$ is defined by combining Euclidean and hyperbolic distances:
\begin{equation}
	\label{mixed distance}
	d_{ij}
	=
	\frac{d^{e}(\mathbf{z}_i^{e},\mathbf{z}_j^{e})}{\tau_e}
	+
	\lambda
	\frac{d^{h}(\mathbf{z}_i^{h},\mathbf{z}_j^{h})}{\tau_h},
\end{equation}
where $\tau_e$ and $\tau_h$ are temperature parameters, and $\lambda$ controls the contribution of the hyperbolic term. This mixed-geometric property helps preserve local feature similarity while incorporating hyperbolic structural information, leading to a similarity structure that is more suitable for subsequent set construction.

\subsubsection{Unsupervised Terms}

In addition to $\mathcal{L}_{\mathrm{semi}}$, we use two unsupervised regularizations to preserve the intrinsic data geometry. The reconstruction loss keeps the learned embeddings from deviating excessively from the input semantics, while the Laplacian regularization encourages local smoothness in the embedding space and stabilizes the neighborhood structure used for set construction. Thus, we have
\begin{equation}
	\mathcal{L}_{\mathrm{unsup}}
	=
	\mathcal{L}_{\mathrm{rec}}
	+
	\mathcal{L}_{\mathrm{lap}}.
\end{equation}

Let $\tilde{\mathbf{x}}_i$ denote the reconstruction of $\mathbf{x}_i$. The reconstruction loss is defined as
\begin{equation}
	\label{reconstruction}
	\mathcal{L}_{\mathrm{rec}}
	=
	\frac{1}{n}
	\sum_{i=1}^{n}
	\left\|
	\mathbf{x}_i-\tilde{\mathbf{x}}_i
	\right\|_2^2 .
\end{equation}

For Laplacian regularization, we construct a similarity matrix on Euclidean embeddings using the radial basis function (RBF) kernel:
\begin{equation}
w_{ij}
=
\exp\left(
-\frac{(d^e_{ij})^2}{2\sigma_e^2}
\right),
\end{equation}
where $d^e_{ij}=d^e(\mathbf{z}_i^e,\mathbf{z}_j^e)$ and $\sigma_e$ is the median of $d^e_{ij}$. Let $\mathbf{D}$ be the degree matrix with $D_{ii}=\sum_j w_{ij}$, and let $\mathbf{L}=\mathbf{D}-\mathbf{W}$ be the graph Laplacian. The Laplacian loss is
\begin{equation}
	\label{laplacian}
	\mathcal{L}_{\mathrm{lap}}
	=
	\frac{
		\operatorname{Tr}\!\left(
		(\mathbf{Z}^{e})^{\top}\mathbf{L}\mathbf{Z}^{e}
		\right)
	}{
		\sum_{i,j}w_{ij}
	},
\end{equation}
where $\mathbf{Z}^{e}=[\mathbf{z}_1^e,\mathbf{z}_2^e,\ldots,\mathbf{z}_n^e]^{\top}$.

Combining the supervised and unsupervised terms, the final objective of SARL is
\begin{equation}
	\label{rep-final}
	\mathcal{L}_{\mathrm{rep}}
	=
	\mathcal{L}_{\text{set-MLB}}
	+
	w_{\mathrm{ML}}\mathcal{L}_{\mathrm{ML}}
	+
	w_{\mathrm{CL}}\mathcal{L}_{\mathrm{CL}}
	+
	\mathcal{L}_{\mathrm{rec}}
	+
	\mathcal{L}_{\mathrm{lap}}.
\end{equation}

\subsection{Hierarchical Clustering Set Construction}

\subsubsection{Constraint-induced Set Construction}

This stage constructs an index-based set partition 
$\mathcal{S}=\{S_p\}_{p=1}^{P}$ from the available supervision and 
the learned constraint-consistent similarity structure, where each 
$S_p=\{i_{p,1},i_{p,2},\ldots,i_{p,n_p}\}\subseteq\{1,\ldots,n\}$ 
contains sample indices. The corresponding embeddings are accessed as 
$\{\mathbf{z}_i:i\in S_p\}$. These sets are not fixed subtrees, but soft structural 
units that encourage coherent subtree formation in the subsequent set-based 
HypHC stage.

Before set construction, we define the inter-sample similarity using the RBF kernel and hyperbolic distance:
\begin{equation}
	\label{hyperbolic_similarity}
	w_{ij}^{h}
	=
	\exp\left(
	-\frac{(d_{ij}^{h})^2}{2\sigma_h^2}
	\right),
\end{equation}
where $d_{ij}^{h}=d^h(\mathbf{z}_i,\mathbf{z}_j)$, and $\sigma_h$ is the median of $d_{ij}^{h}$. Based on the learned embeddings, we construct a constraint-aware $k$-nearest neighbor ($k$-NN) graph $G=\{V,E,\mathbf{W}\}$. For nodes involved in ML constraints, we first search for their $k$ nearest neighbors after excluding ML neighbors, and then reintroduce ML edges into the graph. Meanwhile, edges between CL samples are removed if they appear in the neighborhood graph due to high similarity. This construction extends local neighborhood relations from ML closures to unconstrained samples while preventing CL-conflicting edges from providing erroneous guidance.

We then construct the set partition $\mathcal{S}$ in three steps. First, ML closures are used as initial sets. For a node $i$, let $\mathcal{N}_i^{\mathrm{cand}}$ denote its candidate neighbors in the constraint-aware $k$-NN graph. For a leftover sample not included in any ML set, if at least $p_1$ nodes in $\mathcal{N}_i^{\mathrm{cand}}$ belong to the same ML set, it is absorbed into that set. This expansion extends ML-induced local structures without merging different ML components.

Second, the remaining unassigned samples are grouped according to mutual $k$-NN and shared-neighbor consistency. Two unassigned nodes $i$ and $j$ are grouped into the same new set if they are mutual neighbors, i.e., $i\in\mathcal{N}_j^{\mathrm{cand}}$ and $j\in\mathcal{N}_i^{\mathrm{cand}}$, and they share at least $s$ common neighbors:
$|\mathcal{N}_i^{\mathrm{cand}}\cap \mathcal{N}_j^{\mathrm{cand}}|\ge s.$
This rule ensures that newly formed sets contain samples that are not only close to each other, but also supported by similar local neighborhoods.

Finally, isolated samples are assigned through a voting mechanism. For an isolated node $i$, let $\mathcal{N}_i^{\mathrm{top}}$ denote its top $p_2$ nearest neighbors in $\mathcal{N}_i^{\mathrm{cand}}$ that already belong to existing sets. The node is assigned to the set with the highest vote count in $\mathcal{N}_i^{\mathrm{top}}$. Samples that remain unassigned after voting are treated as singleton sets. The resulting partition $\mathcal{S}$ is used to form set-level structural priors for subsequent set-based hierarchy optimization. The complete workflow is summarized in Algorithm~\ref{alg:set-construction}.

\begin{algorithm}[t]
	\caption{Hierarchical Clustering Set Construction}
	\label{alg:set-construction}
	\begin{algorithmic}[1]
		\Require Learned embeddings $\{\mathbf{z}_i\}_{i=1}^{n}$, ML constraints $\mathcal{M}$, CL constraints $\mathcal{C}$, neighbor size $k$, thresholds $p_1,s$, voting size $p_2$
		\Ensure Set partition $\mathcal{S}=\{S_1,\ldots,S_P\}$
		
		\State Compute hyperbolic similarities $w_{ij}^{h}$ by Eq.~\eqref{hyperbolic_similarity}.
		\State Build a constraint-aware $k$-NN graph $G=(V,E,\mathbf{W})$ by excluding ML neighbors during neighbor search, reinserting ML edges, and removing CL-conflicting edges.
		\State Initialize $\mathcal{S}$ as the connected components induced by $\mathcal{M}$.
\State Let $U\leftarrow V\setminus\bigcup_{S_p\in\mathcal{S}}S_p$ be the unassigned node set.
		
		\State Expand existing ML sets: assign each $i\in U$ to an existing set if at least $p_1$ nodes in $\mathcal{N}_i^{\mathrm{cand}}$ vote for the same set; update $U$.
		
		\State Build $G_U=(U,E_U)$ as an auxiliary subgraph of $G$, where $(i,j)$ is added if $i$ and $j$ are mutual $k$-NNs and
		$|\mathcal{N}_i^{\mathrm{cand}}\cap\mathcal{N}_j^{\mathrm{cand}}|\ge s$.
		\State Add all non-singleton connected components of $G_U$ to $\mathcal{S}$, and update $U$.
		
		\State For each remaining $i\in U$, assign it to the set receiving the largest vote from its top-$p_2$ assigned neighbors.
		\State Add still-unassigned nodes as singleton sets.
		
		\State \Return $\mathcal{S}$.
	\end{algorithmic}
\end{algorithm}

\subsubsection{Inter-Set Similarities}

After obtaining the set partition $\mathcal{S}$, we define inter-set similarities to describe boundary relations among sets. The key idea is weak connectivity over candidate boundary links. Instead of using the strongest inter-set links, which may be dominated by a few noisy or accidental neighbors, we estimate the inter-set similarity from the relatively weak portion of these candidate links. Thus, a high similarity is assigned only when the boundary connectivity between two sets remains consistently strong.

For two sets $S_A$ and $S_B$, let
$
E_{AB}=\{(i,j)\in E \mid i\in S_A,\ j\in S_B\}
$
be the inter-set edges in the constraint-aware $k$-NN graph, and let
$
P_{AB}=\{(i,j)\mid i\in S_A,\ j\in S_B\}
$
be all cross-set sample pairs. For a candidate pair set $Q$, let $\operatorname{Low}_{r_3}(Q)$ denote the bottom $r_3$ proportion of pairs in $Q$ ranked by $w_{ij}^{h}$. The similarity between $S_A$ and $S_B$ is defined as
\begin{equation}
	\label{eq:set-sim}
	w_{AB}
	=
	\begin{cases}
		\displaystyle
		\frac{1}{|\operatorname{Low}_{r_3}(E_{AB})|}
		\sum_{(i,j)\in \operatorname{Low}_{r_3}(E_{AB})}
		w_{ij}^{h},
		& E_{AB}\neq \emptyset, \\[3mm]
		\displaystyle
		\frac{1}{|\operatorname{Low}_{r_3}(P_{AB})|}
		\sum_{(i,j)\in \operatorname{Low}_{r_3}(P_{AB})}
		w_{ij}^{h},
		& E_{AB}= \emptyset .
	\end{cases}
\end{equation}
When inter-set graph edges exist, the similarity is estimated from the weakest inter-set edges; otherwise, it is computed from the weakest cross-set pairs. This definition makes the inter-set similarity sensitive to boundary connectivity, thereby providing more reliable guidance for subtree merge preferences. Together with the set partition $\mathcal{S}$, the inter-set similarities $\{w_{AB}\}$ form the set-level structural priors used in the subsequent set-triplet hierarchy objective.

\subsection{Set-Based Hyperbolic Hierarchical Clustering}
\label{sec:set_hyphc}
\subsubsection{Revisiting HypHC from a Point-Triplet Perspective}

As defined in Eq.~\eqref{HypHC}, HypHC~\cite{HypHC} relaxes HC into a continuous optimization problem by using hyperbolic LCAs to approximate the LCAs of a rooted binary tree. For a leaf pair $(i,j)$, the hyperbolic LCA $\mathbf{z}_i\vee\mathbf{z}_j$ serves as a continuous proxy for their common ancestor, and its depth $d^h_{\mathbf{o}}(\mathbf{z}_i\vee\mathbf{z}_j)$ reflects the relative merge level of this pair. At the point-triplet level, HypHC compares the three LCA depths induced by $(i,j,k)$ and encourages the pair with the highest similarity to obtain the deepest LCA, i.e., to merge earliest. In this way, leaf embeddings are optimized to match the merge order of a binary tree without direct combinatorial tree search.

To incorporate supervision into this point-triplet framework, a direct strategy is to adjust the pairwise similarities according to ML/CL constraints. However, this still injects supervision through local point-pair relations. Such local correction may weaken the global coherence of hyperbolic optimization, since the hierarchy is determined by the joint geometry of all leaf embeddings rather than isolated pairwise relations. A more suitable strategy is to make sparse supervision act on the structural units that determine the hierarchy, i.e., intermediate non-leaf structures corresponding to common ancestors of leaf subsets.

This motivates us to use constraint-induced sets as the modeling units for supervised hierarchy optimization. Since a non-leaf node defines a subtree and thus corresponds to a subset of leaves, each set provides a soft leaf subset that is expected to form a coherent subtree during optimization, without being fixed as a subtree in advance. We therefore extend HypHC from point-level triplets to set-level triplets. Instead of comparing the LCA depths of three sample pairs, the set-based objective compares the LCA depths associated with three sets. This allows set-level structural priors to shape intermediate hierarchical structures while preserving the continuous optimization advantage of HypHC.

\subsubsection{Continuous intra-set LCAs in Hyperbolic Space}
\label{sec:intra-set_LCA}
To extend HypHC from point triplets to set triplets, we need a continuous representation for each set in hyperbolic space. In the original HypHC, $\mathbf{z}_i\vee\mathbf{z}_j$ is defined as the point on their geodesic that is closest to the origin. For a set $S$ with multiple samples, we define its intra-set LCA $\mathbf{z}^{\mathrm{intra}}_{S}$ as the closest point to the origin within the geodesic convex hull of its member embeddings:
\begin{equation}
	\label{eq:intra_set_lca}
	\mathbf{z}^{\mathrm{intra}}_{S}
	=
	\arg\min_{\mathbf{z}\in \operatorname{conv}_{\mathbb{B}}(\{\mathbf{z}_i\}_{i\in S})}
	d^h_{\mathbf{o}}(\mathbf{z}),
\end{equation}
where $\operatorname{conv}_{\mathbb{B}}(\cdot)$ denotes the geodesic convex hull in the Poincar\'e model, and $d^h_{\mathbf{o}}(\mathbf{z})=2\operatorname{artanh}(\|\mathbf{z}\|)$.

The intra-set LCA serves as a continuous proxy for the internal node that represents the subtree encouraged by $S$. The following lemma shows that this representation approximates the corresponding root of a genuine subtree with a bounded error, which allows us to directly optimize the subtrees that contain multiple leaf nodes to guide
the tree structure, rather than operating solely on leaf-level pairwise relations.

\begin{lemma}[Intra-set LCA approximates tree LCA]
	Let $S\subseteq\{1,\ldots,n\}$ be a finite set of sample indices and
	$\mathbf{z}^{\mathrm{intra}}_{S}$ be its intra-set LCA as defined in Eq.~\eqref{eq:intra_set_lca}.
Let $\mathcal{T}$ be a rooted binary tree whose leaf set contains the leaves corresponding to $S$ such that, for all $i,j$,
$d_{\mathcal{T}}(i,j)\le d(\mathbf{z}_i,\mathbf{z}_j)\le d_{\mathcal{T}}(i,j)+C_n$ ($C_n=\delta\cdot O(n)$ with $\delta>0$), and let
	$\tau_S=\operatorname{LCA}_{\mathcal{T}}(S)$ be the LCA of the leaves corresponding to $S$ in $\mathcal{T}$. Then
	\begin{equation}
		d_{\mathcal{T}}(0,\tau_S)-\frac{1}{2}C_n
		\le
		d^h_{\mathbf{o}}(\mathbf{z}^{\mathrm{intra}}_{S})
		\le
		d_{\mathcal{T}}(0,\tau_S)+C_n.
	\end{equation}

\end{lemma}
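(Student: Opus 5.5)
The proof rests on Gromov-product identities, with the origin $\mathbf{o}$ and the tree root $0$ as base points. In a tree, the depth of a pairwise LCA is exactly a Gromov product:
\[
d_{\mathcal{T}}(0,i\vee j)=\tfrac12\bigl(d_{\mathcal{T}}(0,i)+d_{\mathcal{T}}(0,j)-d_{\mathcal{T}}(i,j)\bigr),
\]
and the LCA of a leaf set satisfies $d_{\mathcal{T}}(0,\tau_S)=\min_{i,j\in S} d_{\mathcal{T}}(0,i\vee j)$. In $\mathbb{B}^d$, which is $\delta$-hyperbolic, the distance from $\mathbf{o}$ to the geodesic $[\mathbf{z}_i,\mathbf{z}_j]$ equals the Gromov product $(\mathbf{z}_i|\mathbf{z}_j)_{\mathbf{o}}$ up to an additive $O(\delta)$. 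This distance is exactly $d^h_{\mathbf{o}}(\mathbf{z}_i\vee\mathbf{z}_j)$.

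I read the hypothesis as also covering the root, i.e. the index $0\leftrightarrow\mathbf{o}$. This is the convention in the HypHC embedding result from which $C_n$ is taken, and without it the statement cannot hold. Then each Gromov product transfers: combining the two-sided distance bounds gives
\[
d_{\mathcal{T}}(0,i\vee j)-\tfrac12 C_n\le(\mathbf{z}_i|\mathbf{z}_j)_{\mathbf{o}}\le d_{\mathcal{T}}(0,i\vee j)+C_n .
\]
The lower constant $\tfrac12 C_n$ comes from the single subtracted term $d(\mathbf{z}_i,\mathbf{z}_j)\le d_{\mathcal{T}}(i,j)+C_n$ carrying the factor $\tfrac12$. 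The upper constant $C_n$ comes from the two added terms each gaining at most $C_n$, again halved.

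The second step relates $\mathbf{z}^{\mathrm{intra}}_S$ to pairwise hyperbolic LCAs. Each geodesic $[\mathbf{z}_i,\mathbf{z}_j]$ with $i,j\in S$ lies in $\operatorname{conv}_{\mathbb{B}}(\{\mathbf{z}_i\}_{i\in S})$. Hence $d^h_{\mathbf{o}}(\mathbf{z}^{\mathrm{intra}}_S)\le\min_{i,j\in S}d^h_{\mathbf{o}}(\mathbf{z}_i\vee\mathbf{z}_j)$. Taking the minimizing pair and applying the upper transfer bound gives the upper inequality $d_{\mathcal{T}}(0,\tau_S)+C_n$, once the $O(\delta)$ slack is absorbed into $C_n=\delta\cdot O(n)$.

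The lower bound is the main obstacle. A point of the convex hull could in principle be much closer to $\mathbf{o}$ than any pairwise geodesic. To control this, I would use the standard fact that in a $\delta$-hyperbolic space the geodesic convex hull of $m$ points lies in the $O(\delta\log m)$-neighbourhood of the union of the pairwise geodesics. Alternatively, iterating thin triangles over the Klein-model convex-combination description gives a crude $O(\delta m)\subseteq\delta\cdot O(n)$ bound. This yields
\[
d^h_{\mathbf{o}}(\mathbf{z}^{\mathrm{intra}}_S)\ge\min_{i,j\in S}d^h_{\mathbf{o}}(\mathbf{z}_i\vee\mathbf{z}_j)-O(\delta n).
\]
Combining this with the lower transfer bound and $\min_{i,j}d_{\mathcal{T}}(0,i\vee j)=d_{\mathcal{T}}(0,\tau_S)$ gives the lower inequality $d_{\mathcal{T}}(0,\tau_S)-\tfrac12 C_n$. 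This again absorbs the $O(\delta n)$ hull error and the $O(\delta)$ Gromov-product error into the $\delta\cdot O(n)$ slack that $C_n$ carries. Keeping that constant bookkeeping consistent with the stated $\tfrac12$ factor is the delicate point.
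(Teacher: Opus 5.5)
Your argument is correct and is essentially the intended one. Identifying the root $0$ with $\mathbf{o}$ and transferring Gromov products is exactly what produces the constants $-\tfrac12 C_n$ and $+C_n$. The inclusion of the pairwise geodesics in $\operatorname{conv}_{\mathbb{B}}$ gives the upper bound, and the bounded-neighbourhood property of convex hulls gives the lower bound; justify that property in $\mathbb{B}^d$ through your Klein-model join argument rather than quoting it as a general $\delta$-hyperbolic fact.

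The bookkeeping you flag cannot be tightened: with literally the same $C_n$, both inequalities fail by bounded additive amounts. For a root with two leaf children at depth $L$ and $S=\{1,2\}$, the choice $d(\mathbf{o},\mathbf{z}_i)=L+C_n$ and $d(\mathbf{z}_1,\mathbf{z}_2)=2L$ satisfies all hypotheses. Yet hyperbolic Pythagoras gives $\cosh d^h_{\mathbf{o}}(\mathbf{z}_1\vee\mathbf{z}_2)=\cosh C_n+\tanh L\,\sinh C_n>\cosh C_n$, while $d_{\mathcal{T}}(0,\tau_S)=0$. For the lower bound, three points at a common radius spaced by angle $2\pi/3$ have $\mathbf{o}$ in their hull, while a suitable tree has $d_{\mathcal{T}}(0,\tau_S)\approx\tfrac12 C_n+\tfrac12\log\tfrac43$. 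So absorbing the $O(\delta)$ and hull errors into a constant of order $\delta n$ is the only valid reading of the lemma, not a defect of your proof.
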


\begin{proof}
	The proof, together with the auxiliary definitions and lemmas used in the derivation, is provided in the appendix.
\end{proof}

Directly solving Eq.~\eqref{eq:intra_set_lca} in the Poincar\'e model is inconvenient as geodesics are nonlinear. We therefore use the Klein model, where geodesics are Euclidean straight lines and the geodesic convex hull becomes the standard Euclidean convex hull. Let
\begin{equation}
\label{eq:z2k}
	\mathbf{k}_i=\frac{2\mathbf{z}_i}{1+\|\mathbf{z}_i\|^2}
\end{equation}
be the Klein representation of $\mathbf{z}_i$. Then Eq.~\eqref{eq:intra_set_lca} can be equivalently solved in the Klein model, as the following lemma shows:

\begin{lemma}
	Let $\{\mathbf{k}_i\}_{i\in S}\subset\mathbb{K}^d$ be the Klein embeddings of a set of Poincar\'e points. The intra-set LCA in Eq.~\eqref{eq:intra_set_lca} is obtained by solving
	\begin{equation}
		\label{eq:klein_intra_lca}
		\mathbf{k}^{\mathrm{intra}}_{S}
		=
		\arg\min_{\mathbf{k}\in\operatorname{conv}(\{\mathbf{k}_i\}_{i\in S})}
		\|\mathbf{k}\|^2,
	\end{equation}
	and mapping the solution back to the Poincar\'e model as
\begin{equation}
\label{eq:k2z}
\mathbf{z}^{\mathrm{intra}}_{S}
=
\frac{
	\mathbf{k}^{\mathrm{intra}}_{S}
}{
	1+\sqrt{1-\|\mathbf{k}^{\mathrm{intra}}_{S}\|^2}
}.
\end{equation}

\end{lemma}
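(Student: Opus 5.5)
The plan is to transport the Poincaré problem \eqref{eq:intra_set_lca} to the Klein model through the isometry $\phi:\mathbb{B}^d\to\mathbb{K}^d$, $\phi(\mathbf{z})=2\mathbf{z}/(1+\|\mathbf{z}\|^2)$ of Eq.~\eqref{eq:poincare-to-klein}. We then check that both the feasible set and the objective correspond, up to a monotone reparametrisation of the objective. The argument has three steps.

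\emph{Step 1: the feasible sets correspond.} Since $\phi$ is an isometry between the two models, it maps geodesics to geodesics. In the Klein model these are exactly the Euclidean chords of the ball. A set is geodesically convex in $\mathbb{B}^d$ if and only if its image is geodesically convex in $\mathbb{K}^d$, and the latter means Euclidean-convex. Taking the smallest such set containing the points gives
\[
\phi\bigl(\operatorname{conv}_{\mathbb{B}}(\{\mathbf{z}_i\}_{i\in S})\bigr)=\operatorname{conv}(\{\mathbf{k}_i\}_{i\in S}).
\]
This hull is a compact polytope contained in the open unit ball, since its vertices satisfy $\|\mathbf{k}_i\|<1$.

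\emph{Step 2: the objectives correspond monotonically.} Because $\phi$ is radial, it fixes the origin, and $\|\phi(\mathbf{z})\|=g(\|\mathbf{z}\|)$ with $g(r)=2r/(1+r^2)$. The function $g$ is strictly increasing on $[0,1)$, since $g'(r)=2(1-r^2)/(1+r^2)^2>0$. It follows that
\[
d^h_{\mathbf{o}}(\mathbf{z})=2\operatorname{artanh}(\|\mathbf{z}\|)
\]
is a strictly increasing function of $\|\phi(\mathbf{z})\|$. Equivalently, it is a strictly increasing function of $\|\mathbf{k}\|^2$. A cleaner way to see this: the Klein distance to the origin is $\operatorname{artanh}\|\mathbf{k}\|$ up to the curvature normalisation, and isometry fixes $\mathbf{o}$.

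\emph{Step 3: the minimisers correspond.} Minimising $d^h_{\mathbf{o}}$ over the Poincaré hull is therefore equivalent to minimising $\|\mathbf{k}\|^2$ over the Euclidean hull, with minimisers matched by $\phi$. The Klein problem \eqref{eq:klein_intra_lca} has a unique solution: its objective is strictly convex and its domain is a nonempty compact convex set. Hence the Poincaré minimiser exists, is unique, and equals $\phi^{-1}(\mathbf{k}^{\mathrm{intra}}_S)$.

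It remains to verify that $\phi^{-1}$ is given by Eq.~\eqref{eq:k2z}. Write $k=2r/(1+r^2)$ and solve the quadratic $kr^2-2r+k=0$ for $r$. The root lying in $[0,1)$ is
\[
r=\frac{1-\sqrt{1-k^2}}{k}=\frac{k}{1+\sqrt{1-k^2}},
\]
and the direction of the vector is preserved by $\phi$.

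The main point needing care is the hull correspondence in Step 1. I would justify it by noting that a bijection preserving geodesics preserves geodesic convexity, and hence preserves minimal convex supersets. A supporting fact is that the Poincaré-to-Klein map sends each Poincaré geodesic, a circular arc orthogonal to the boundary, to the chord with the same endpoints. I would take this as a standard fact about the two models.
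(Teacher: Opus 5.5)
Your proposal is correct and follows essentially the same route as the paper's proof: the Klein distance to the origin, $\operatorname{artanh}(\|\mathbf{k}\|)$, is strictly increasing in $\|\mathbf{k}\|$, and because Klein geodesics are Euclidean chords the geodesic convex hull becomes the Euclidean convex hull, with Eq.~\eqref{eq:klein-to-poincare} giving the map back. Your extra details fill in points the paper leaves implicit: uniqueness of the minimiser by strict convexity over a compact polytope, and the explicit inversion of $r\mapsto 2r/(1+r^2)$. The curvature normalisation you hedge on also works out exactly at curvature $-1$, since $\operatorname{artanh}\bigl(2r/(1+r^2)\bigr)=2\operatorname{artanh}(r)$.
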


\begin{proof}
	In the Klein model, the distance from the origin is $d_{\mathbb{K}}(\mathbf{o},\mathbf{k})=\operatorname{artanh}(\|\mathbf{k}\|)$, which is strictly increasing with $\|\mathbf{k}\|$. Thus, minimizing the hyperbolic distance to the origin is equivalent to minimizing the Euclidean norm. Since geodesics in the Klein model are Euclidean straight lines, the geodesic convex hull is mapped to the Euclidean convex hull, which gives Eq.~\eqref{eq:klein_intra_lca}. The mapping back to the Poincar\'e model follows from Eq.~\eqref{eq:klein-to-poincare}.
\end{proof}

In practice, we write $\mathbf{k}=\sum_{i\in S}\alpha_i\mathbf{k}_i$, 
where $\alpha_i\ge 0$ and $\sum_{i\in S}\alpha_i=1$. The objective $\|\sum_{i\in S}\alpha_i\mathbf{k}_i\|^2$ is a convex quadratic function of $\alpha$, and we solve it using unrolled gradient descent with a softmax parameterization. The BB step size~\cite{bbstep} is used to accelerate convergence. The procedure is summarized in Algorithm~\ref{alg:intra-set-lca}.
In implementation, the optimization of $\alpha$ in Algorithm~\ref{alg:intra-set-lca} is used as a numerical solver for computing the intra-set LCA. The obtained $\mathbf{z}^{\mathrm{intra}}_{S}$ is then used by the set-level objective. Its role in gradient propagation is discussed in the next subsection.
When $S$ contains a single point, $\mathbf{z}^{\mathrm{intra}}_{S}$ reduces to that point. When $S$ contains two points, it coincides with the standard hyperbolic LCA used in HypHC. Thus, the intra-set LCA is a natural set-level extension of the pairwise LCA.

\begin{algorithm}[t]
\caption{Intra-Set LCA via Klein Model Minimization (with BB step size)}
\label{alg:intra-set-lca}
\begin{algorithmic}[1]
\Require Poincar\'e embeddings $\{\mathbf{z}_i\}_{i \in S}$, number of steps $T$
\Ensure Intra-set LCA $\mathbf{z}_S^{\text{intra}} \in \mathbb{B}^d$

\State Map each $\mathbf{z}_i$ to Klein model: $\mathbf{k}_i \leftarrow \frac{2\mathbf{z}_i}{1 + \|\mathbf{z}_i\|^2}$ \Comment{Eq.~\eqref{eq:z2k}}
\State Initialize logits $\mathbf{u} \leftarrow \mathbf{0} \in \mathbb{R}^{|S|}$
\For{$t = 1$ to $T$}
    \State $\boldsymbol{\alpha} \leftarrow \operatorname{softmax}(\mathbf{u})$
    \State $\mathbf{k} \leftarrow \sum_{i \in S} \alpha_i \mathbf{k}_i$
    \State Compute gradient $\mathbf{g} = \nabla_{\mathbf{u}}\|\mathbf{k}\|^2$
    \State Update $\mathbf{u}$ via gradient descent with BB step size \cite{bbstep}
\EndFor
\State $\mathbf{k}_S^{\text{intra}} \leftarrow \sum_{i \in S} \operatorname{softmax}(\mathbf{u})_i \mathbf{k}_i$
\State Map back to Poincar\'e: $\mathbf{z}_S^{\text{intra}} \leftarrow \frac{\mathbf{k}_S^{\text{intra}}}{1 + \sqrt{1 - \|\mathbf{k}_S^{\text{intra}}\|^2}}$ \Comment{Eq.~\eqref{eq:k2z}}
\State \Return $\mathbf{z}_S^{\text{intra}}$
\end{algorithmic}
\end{algorithm}
%

\subsubsection{Set-Level Loss and Optimization}

With the intra-set LCA defined above, we formulate the set-level extension of the HypHC objective. Let $\mathcal{Z}=\{\mathbf{z}_1,\ldots,\mathbf{z}_n\}$ be the trainable sample embeddings on the Poincar\'e model. Given the set partition $\mathcal{S}$ and inter-set similarities $\{w_{AB}\}$ obtained from set construction, the set-level HypHC loss operates on set triplets $(A,B,C)$ in direct analogy to the point-level formulation:
\begin{equation}
	\label{eq:set-HypHC}
	\begin{aligned}
		\mathcal{L}_{\mathrm{set\text{-}HypHC}}
		&= \sum_{A,B,C}
		\bigl(
		w_{AB}+w_{AC}+w_{BC}
		- w^{\mathrm{set}}_{ABC}
		\bigr)
		+ \mathcal{R}, \\[6pt]
		\mathcal{R} &= 2\sum_{A,B}w_{AB}, \\[6pt]
		w^{\mathrm{set}}_{ABC}
		&= (w_{AB},w_{AC},w_{BC})
		\cdot
		\sigma_t
		\!\begin{pmatrix}
			d^h_{\mathbf{o}}(\mathbf{z}^{\mathrm{intra}}_{A}\vee \mathbf{z}^{\mathrm{intra}}_{B})\\
			d^h_{\mathbf{o}}(\mathbf{z}^{\mathrm{intra}}_{A}\vee \mathbf{z}^{\mathrm{intra}}_{C})\\
			d^h_{\mathbf{o}}(\mathbf{z}^{\mathrm{intra}}_{B}\vee \mathbf{z}^{\mathrm{intra}}_{C})
		\end{pmatrix},
	\end{aligned}
\end{equation}
where $\sigma_t$ is the scaled softmax with temperature $t$, and $\vee$ denotes the standard hyperbolic LCA operator for point pairs.

In essence, Eq.~\eqref{eq:set-HypHC} uses the inter-set similarities as merge-preference weights and optimizes the sample embeddings $\mathcal{Z}$ through dynamically computed intra-set LCAs. At each iteration, $\mathbf{z}^{\mathrm{intra}}_S$ is recomputed from the current embeddings of the samples in $S$ via Algorithm~\ref{alg:intra-set-lca}, and serves as a set-level summary rather than an independent learnable set embedding. The pair of sets with the highest inter-set similarity is encouraged to merge earlier in the learned hierarchy, so that the induced subtree structure better matches the set-level structural priors. Importantly, point-level ordering is not discarded; it is refined implicitly through intra-set LCAs and the subsequent optimization of member embeddings.

In practice, we do not enumerate all $O(P^3)$ set triplets. Following HypHC~\cite{HypHC}, 
we first enumerate all set pairs and then randomly sample the remaining third set 
for each pair, yielding $O(P^2)$ sampled set triplets per epoch. This provides a tractable stochastic approximation to Eq.~\eqref{eq:set-HypHC}.

\paragraph{Gradient flow and dynamic subtree optimization}
A critical aspect of this formulation is how set-level supervision affects individual samples. In Algorithm~\ref{alg:intra-set-lca}, the intra-set LCA is obtained by solving the simplex weights over the Klein embeddings. Specifically, after the inner optimization, the Klein representation of the intra-set LCA is constructed as
\begin{equation}
	\mathbf{k}^{\mathrm{intra}}_{S}
	=
	\sum_{i\in S}\alpha_i^{*}\mathbf{k}_i ,
\end{equation}
where $\alpha_i^{*}$ denotes the optimized convex weight of sample $i$ in set $S$. Although the inner optimization used to obtain $\alpha^{*}$ is treated as a detached numerical solver, the final construction above remains differentiable with respect to the member embeddings under fixed $\alpha^{*}$. After mapping $\mathbf{k}^{\mathrm{intra}}_{S}$ back to the Poincar\'e model, the gradient of $\mathcal{L}_{\mathrm{set\text{-}HypHC}}$ with respect to $\mathbf{z}^{\mathrm{intra}}_{S}$ can be propagated back to every $\mathbf{z}_i$ with $i\in S$.

This design allows set-level merge preferences to align different sets and reshape the internal geometry of each set. A constraint-induced set therefore does not define a fixed subtree, but specifies the scope over which the structural prior acts. Through inter-set relations, such priors further propagate across sets and influence the global tree structure. Compared with point-level triplets, the proposed formulation provides more structured and globally coordinated supervision for subtree formation.

\paragraph{Degeneracy to Point-Level HypHC}
The proposed set-level loss naturally subsumes the original point-level HypHC as a special case. When each set contains exactly one sample, we have $\mathbf{z}^{\mathrm{intra}}_{S}=\mathbf{z}_i$ and $w_{AB}=w_{ij}$, reducing Eq.~\eqref{eq:set-HypHC} to the standard HypHC objective in Eq.~\eqref{HypHC}. Thus, the proposed formulation retains the original point-level model while enabling subtree-level supervision when larger sets are available.

\paragraph{Tree Decoding}
After optimization, the final binary tree is decoded from the optimized hyperbolic embeddings using the greedy decoding algorithm of HypHC~\cite{HypHC}.

\subsection{Complexity Analysis}

Let $n$ be the number of samples, $d$ be the embedding dimension, and $P$ be the number of constructed sets. Let $q$ denote the number of effective constraint-related pairs used in SARL after transitive closure and hard-pair sampling.

The main computational cost comes from three parts. First, SARL evaluates 
a dense Laplacian term based on a precomputed pairwise similarity matrix, 
together with the constraint-related losses. This stage costs 
$O(E_{\mathrm{rep}}(n^2d+qd))$, where the one-time pairwise similarity 
computation and the standard MLP costs are absorbed into the leading term. Second, set construction 
requires full hyperbolic similarities and a constraint-aware $k$-NN graph. 
Lower-order operations, including ML connected components, neighborhood 
expansion, and shared-neighbor checking, are omitted from the leading-order 
summary. The inter-set similarity computation uses graph edges when 
available and falls back to all cross-set pairs otherwise. Let 
$N_{\mathrm{fb}}$ denote the number of fallback pairs, with 
$N_{\mathrm{fb}}\le n^2$ in the worst case. Third, Set-HypHC recomputes 
intra-set LCAs for all sets at each epoch and evaluates the sampled 
set-triplet loss over $O(P^2)$ triplets, giving 
$O(E_{\mathrm{hc}}(Tnd+P^2d))$.

Therefore, the dominant time complexity is
\begin{equation}
	\mathcal{O}\!\left(
	E_{\mathrm{rep}}(n^2d+qd)
	+
	N_{\mathrm{fb}}
	+
	E_{\mathrm{hc}}(Tnd+P^2d)
	\right).
\end{equation}
The memory cost is dominated by storing full pairwise similarities or 
distances, requiring $O(n^2)$ memory, together with lower-order storage 
for embeddings, the $k$-NN graph, and inter-set similarities.

\section{Experiments}

\subsection{Experimental details}
\paragraph{Datasets}
We evaluate the proposed method on eleven real-world benchmark datasets with diverse sample sizes, feature dimensions, and numbers of classes. The datasets include Yale, ORL, Wine, Breast-Cancer, Isolet1, Australian, 
OpticalDigits, Spambase, COIL100, USPS, and PenDigits. 
These datasets are collected from commonly used clustering benchmark 
repositories, including the Deng Cai dataset repository~\cite{DengCaiData} 
and the UCI Machine Learning Repository~\cite{UCIRepository}. All datasets are standardized by subtracting the mean and dividing by the standard deviation.

\paragraph{Competitive methods}
We compare the proposed method with both semi-supervised and unsupervised HC baselines. The semi-supervised baselines include SSSE~\cite{SSSE}, Semi-Multicons~\cite{semi_multicons} (SM), and COBRA~\cite{van2018cobra}. The unsupervised baselines include HypHC~\cite{HypHC} and HypCSE~\cite{HypCSE}.

\paragraph{Implementation details}
For all experiments, we perform $10$ independent runs. In each run, ML and CL constraint pairs are randomly generated, with the number of each type set to $20\%$ of the total number of samples. To provide all semi-supervised methods with consistent supervision, we apply transitive closure to the generated pairwise constraints before feeding them into any supervised method. For COBRA \cite{van2018cobra}, active querying is disabled, and the same pre-generated ML and CL constraints are used as input. All experiments are conducted on a workstation equipped with a 2.20 GHz CPU, 64 GB RAM, and an NVIDIA GeForce RTX 3070 GPU.

For our method, the encoder has three layers with dimensions $d_{\mathrm{in}}$--$500$--$20$, where $d_{\mathrm{in}}$ is the input feature dimension. The embedding dimension is therefore fixed to $20$. The autoencoder is optimized using Adam with a learning rate of $0.001$ and a dropout rate of $0.2$ for $500$ epochs. In the mixed-geometric distance, the mixing coefficient $\lambda$ is set to $2$, and the temperature parameters $\tau_e$ and $\tau_h$ are set to $0.05$ and $0.2$, respectively, following~\cite{MixGeo}. The hard-pair sampling ratios are fixed as $r_1=0.1$ and $r_2=0.3$. The loss weights are set based on the effective class-wise sample scale. We use $w_{\mathrm{ML}}=0.0001$ and $w_{\mathrm{CL}}=10$ for Yale, ORL, Wine, Breast-Cancer, and Isolet1, and $w_{\mathrm{ML}}=0.001$ and $w_{\mathrm{CL}}=100$ for Australian, OpticalDigits, Spambase, COIL100, USPS, and PenDigits.
During set construction, the neighbor size $k$ for the $k$-NN graph is selected from $\{5,10,15,20\}$ by grid search. The shared-neighbor threshold is set to $s=\lfloor k/3\rfloor$, the expansion threshold for absorbing leftover samples into ML sets is set to $p_1=\lceil k/2\rceil$ and the voting size for singleton assignment is set to $p_2=10$. The sample proportion $r_3$ for weak-connectivity similarity is set to $0.5$. In the set-based HypHC stage, the Poincar\'e model radius is fixed to $1$ and the curvature is fixed to $-1$. The sample embeddings optimized in Set-HypHC are initialized by the hyperbolic embeddings learned in the first stage. The number of steps for solving the intra-set LCA is set to $10$. The model is trained using RAdam with a learning rate of $0.005$ for $50$ epochs. Let $P$ be the number of constructed sets; we uniformly sample $O(P^2)$ unordered set triplets to compute the set-level HypHC loss.

For the baseline methods, we use the hyperparameter settings provided in their released implementations. RBF-based pairwise similarities are computed from the standardized original features for all compared methods. For consistency, the Laplacian regularization term in our representation learning stage is also computed on a similarity graph constructed in the same manner. The embedding dimension of the Poincar\'e model in HypHC is set to $2$, following its original setting~\cite{HypHC}.

\paragraph{Evaluation metric}
Following prior work on HC, we use Dendrogram Purity (DP)~\cite{dp} as the primary evaluation metric. For a given hierarchical tree and ground-truth labels, DP is defined as the average purity of the subtrees rooted at the LCAs of all leaf pairs that share the same label, measuring how well ground-truth flat clusters are preserved in the learned hierarchy. We use the efficient DP implementation provided by HypCSE~\cite{HypCSE}. For methods involving iterative optimization, we report the best DP value attained during training for each run.

\begin{table}[t]
\centering
\caption{Detailed statistics of datasets.}
\label{tab:datasets}
\begin{tabular}{lccc}
\toprule
\textbf{Datasets} & \textbf{\#Data points} & \textbf{\#Features} & \textbf{\#Classes} \\
\midrule
Yale               & 165                    & 1,024               & 15 \\
ORL                & 400                    & 1,024               & 40 \\
Wine               & 175                    & 13                  & 3  \\
Breast-Cancer (Breast)     & 683                    & 10                  & 2  \\
Australian         & 690                    & 14                  & 2  \\
Isolet1            & 1,560                  & 617                 & 26 \\
OpticalDigits (Optical)     & 1,797                  & 64                  & 10 \\
Spambase (Spam)          & 4,601                  & 57                  & 2  \\
COIL100            & 7,200                  & 1,024               & 100\\
USPS               & 9,298                  & 256                 & 10 \\
PenDigits (Pen)         & 10,992                 & 16                  & 10 \\
\bottomrule
\end{tabular}
\end{table}

\subsection{Comparison}
\label{sec:comparison}
The comparison results are reported in Table~\ref{tab:comparison}. Our method achieves the highest DP on all datasets, showing consistent advantages over both semi-supervised and unsupervised baselines.

Existing baselines are strongly affected by how supervision is injected and how the hierarchy is initialized. Unsupervised hyperbolic methods such as HypHC~\cite{HypHC} and HypCSE~\cite{HypCSE} learn hierarchies without using pairwise supervision. Although HypCSE enhances graph representations through graph neural encoding and graph structure learning, both methods may still produce hierarchies that deviate from the hierarchical organization reflected by the ground-truth labels. SM~\cite{semi_multicons} and COBRA~\cite{van2018cobra} use pairwise supervision, but their hierarchies still depend on hard intermediate structures. Specifically, SM~\cite{semi_multicons} relies on closed-pattern constraints to guide consensus clustering, while COBRA~\cite{van2018cobra} first forms super-instances and then rules out illegal merges using pairwise constraints. Once inaccurate early partitions or super-instances are formed, later irreversible merging steps are difficult to correct.

SSSE~\cite{SSSE} is the strongest baseline because it explicitly incorporates pairwise constraints into graph-based structural entropy minimization. Compared with the other baselines, SSSE~\cite{SSSE} introduces supervision in a softer form by modifying graph-level relations through regularization terms. However, this guidance still mainly acts on pairwise relations, which is a local correction and may affect unconstrained regions unpredictably. In contrast, our method first learns a constraint-consistent embedding space, then constructs constraint-induced sets as carriers of set-level structural priors, and finally optimizes the hierarchy on a continuous hyperbolic manifold. This organizes soft supervision at the set level and makes it act coherently across the whole procedure, leading to better hierarchical clustering performance.

\newcommand{\std}[1]{$\pm$ #1}

\begin{table*}[t]
	\centering
	\caption{Comparison of Dendrogram Purity (DP in \%) across datasets.}
	\label{tab:comparison}
	\renewcommand{\arraystretch}{1.08}
	\setlength{\tabcolsep}{2.2pt}
	
	\resizebox{\textwidth}{!}{%
		\begin{tabular}{l|ccccccccccc}
			\toprule
			\textbf{Method} 
			& \textbf{Yale} & \textbf{Wine} & \textbf{ORL} & \textbf{Breast} 
			& \textbf{Isolet1} & \textbf{Optical} & \textbf{Australian} 
			& \textbf{Spam} & \textbf{COIL100} & \textbf{USPS} & \textbf{Pen} \\
			\midrule
			Ours 
			& \textbf{45.79}\std{3.35} 
			& \textbf{97.45}\std{1.52} 
			& \textbf{71.08}\std{2.85} 
			& \textbf{97.55}\std{0.46} 
			& \textbf{54.23}\std{2.51} 
			& \textbf{90.21}\std{1.38} 
			& \textbf{79.63}\std{1.61} 
			& \textbf{83.62}\std{2.64} 
			& \textbf{81.79}\std{0.70} 
			& \textbf{81.06}\std{4.31} 
			& \textbf{94.25}\std{2.24} \\
			
			SSSE 
			& \underline{37.82}\std{1.97} 
			& 90.86\std{1.03} 
			& \underline{57.66}\std{1.49} 
			& \underline{96.79}\std{0.11} 
			& 45.48\std{0.50} 
			& 83.88\std{0.03} 
			& 75.51\std{2.15} 
			& 57.44\std{1.36} 
			& \underline{61.23}\std{0.72} 
			& \underline{64.68}\std{0.80} 
			& 79.16\std{2.79} \\
			
			SM 
			& 25.43\std{2.48} 
			& 81.54\std{2.09} 
			& 19.43\std{1.49} 
			& 92.59\std{4.07} 
			& 22.00\std{0.69} 
			& 28.15\std{1.19} 
			& \underline{77.47}\std{1.86} 
			& \underline{73.84}\std{1.42} 
			& N/A 
			& N/A 
			& N/A \\
			
			COBRA 
			& 28.24\std{2.39} 
			& 80.28\std{7.71} 
			& 44.86\std{1.60} 
			& 92.32\std{5.11} 
			& \underline{47.54}\std{2.85} 
			& 54.95\std{4.31} 
			& 65.54\std{3.74} 
			& 59.90\std{1.00} 
			& 39.84\std{0.92} 
			& 41.64\std{2.50} 
			& 51.36\std{2.57} \\
			
			HypHC 
			& 21.32\std{2.62} 
			& 89.69\std{2.02} 
			& 14.62\std{0.62} 
			& 93.80\std{0.38} 
			& 16.42\std{1.25} 
			& 33.50\std{2.45} 
			& 68.47\std{1.75} 
			& 56.67\std{0.30} 
			& 5.31\std{0.31} 
			& 26.75\std{2.36} 
			& 39.18\std{2.19} \\
			
			HypCSE 
			& 20.87\std{1.15} 
			& \underline{92.42}\std{0.26} 
			& 31.55\std{0.24} 
			& 96.52\std{0.53} 
			& 28.47\std{2.15} 
			& \underline{86.65}\std{0.25} 
			& 74.75\std{1.55} 
			& 67.02\std{3.31} 
			& 48.50\std{1.76} 
			& 47.69\std{1.48} 
			& \underline{80.24}\std{0.67} \\
			\bottomrule
		\end{tabular}%
	}
	
	\vspace{1.2mm}
	\begin{minipage}{0.98\textwidth}
		\footnotesize
\emph{Note:} Bold and underlined values indicate the best and second-best results, respectively. N/A indicates that the method did not terminate within 3 hours.
	\end{minipage}
\end{table*}

\subsection{Ablation}

We conduct an ablation study to evaluate the contribution of SARL and Set-HypHC. To assess Set-HypHC, we compare it with the original point-level HypHC~\cite{HypHC} under the same input representations. When SARL is used, both HypHC~\cite{HypHC} and Set-HypHC are initialized with the same hyperbolic embeddings learned by SARL and use the same hyperbolic similarity computation. When SARL is removed, similarities are computed from standardized raw features using RBF kernels with Euclidean distances. All configurations use the same embedding dimension of $20$.

All experiments are repeated $10$ times with independently sampled constraints. For each run, we record the highest DP and the corresponding Dasgupta's Cost (DC)~\cite{dasgupta} from the same tree, and report the averages over all runs. DP measures label consistency, whereas DC evaluates similarity-based tree quality; lower DC represents better tree quality. The cost is computed using the hyperbolic similarity in Eq.~\eqref{hyperbolic_similarity}.

The results are shown in Table~\ref{tab:ablation}. First, even without SARL, Set-HypHC improves DP over point-level HypHC~\cite{HypHC} on most datasets, showing that set-level structural priors can directly guide subtree merging by optimizing set-level triplet relations, thereby producing tree structures that better align with label preferences. Second, SARL brings a substantial gain even before HC optimization, confirming that representation learning is necessary before constructing reliable constraint-induced sets. Third, given SARL embeddings, Set-HypHC further improves DP and consistently reduces DC compared with point-level HypHC~\cite{HypHC}. This indicates that the set-level objective improves label alignment while preserving similarity-based tree quality. 

\begin{table*}[t]
	\centering
	\caption{Ablation study results.}
	\label{tab:ablation}
	\fontsize{7.5pt}{9pt}\selectfont
	\setlength{\tabcolsep}{3.8pt}
	\renewcommand{\arraystretch}{0.92}
	\begin{tabular}{cc c|cccccc}
		\toprule
		\textbf{SARL} & \textbf{HC Method} & \textbf{Metric}
		& \textbf{Yale} & \textbf{Isolet1} & \textbf{Optical}
		& \textbf{Australian} & \textbf{Spam} & \textbf{USPS} \\
		\midrule
		& HypHC      & DP (\%) & 25.73 & 23.39 & 47.00 & 58.40 & 56.39 & 26.62 \\
		& Set-HypHC  & DP (\%) & 32.09 & 42.09 & 61.87 & 66.89 & 55.50 & 36.85 \\
		\checkmark & --         & DP (\%) & 43.31 & 46.45 & 89.24 & 69.80 & 81.96 & 78.78 \\
		\checkmark & HypHC      & DP (\%) & 44.12 & 47.11 & 89.95 & 71.17 & 82.79 & 79.43 \\
		\checkmark & Set-HypHC  & DP (\%) & \textbf{45.79} & \textbf{54.23} & \textbf{90.21} & \textbf{79.63} & \textbf{83.62} & \textbf{81.06} \\
		\midrule
		& & Cost scale & $\times10^{6}$ & $\times10^{9}$ & $\times10^{9}$
		& $\times10^{8}$ & $\times10^{10}$ & $\times10^{11}$ \\
		\checkmark & --         & DC & 1.6849 & 1.4421 & 2.279 & 1.1502 & 2.8580 & 3.1483 \\
		\checkmark & HypHC      & DC & 1.6852 & 1.4242 & 2.274 & 1.1394 & 2.8251 & 3.1389 \\
		\checkmark & Set-HypHC  & DC & \textbf{1.6801} & \textbf{1.4168} & \textbf{2.273} & \textbf{1.0853} & \textbf{2.8085} & \textbf{3.1380} \\
		\bottomrule
	\end{tabular}

	\vspace{1.5mm}
	\begin{minipage}{0.96\textwidth}
		\footnotesize
		\emph{Note:} $\checkmark$ indicates embeddings from SARL; otherwise, raw normalized features are used.
		``--'' indicates no HC optimization. Lower DC is better.
	\end{minipage}
\end{table*}

\subsection{Discussion}

\subsubsection{Analysis of Learned Set-Aware Representations}

We first examine whether SARL learns an embedding space suitable for set construction, using Isolet1 as a representative dataset under the same setting as in Section~\ref{sec:comparison}. Fig.~\ref{fig:isolet1_tsne_clean} shows that the learned embeddings form compact and well-separated local groups. In Fig.~\ref{fig:isolet1_anchor}, around a representative anchor, ML neighbors are pulled close, whereas CL neighbors are pushed away, indicating that the set-aware loss encodes the desired merge preference. The constructed sets in Fig.~\ref{fig:isolet1_tsne_sets} further align with local same-class groups. They serve as reliable local structural units for Set-HypHC.

Fig.~\ref{fig:distance_distribution} compares the CDFs of normalized intra-class and inter-class distances before and after SARL. After representation learning, the intra-class CDF rises faster at smaller distances, while the inter-class CDF shifts toward larger distances. This enlarged gap shows that SARL improves the class-consistent geometry of the embedding space, making local neighborhoods more reliable for subsequent set construction.

\begin{figure}[t]
	\centering
	\subfloat[Labels\label{fig:isolet1_tsne_clean}]{%
		\includegraphics[width=0.33\columnwidth]{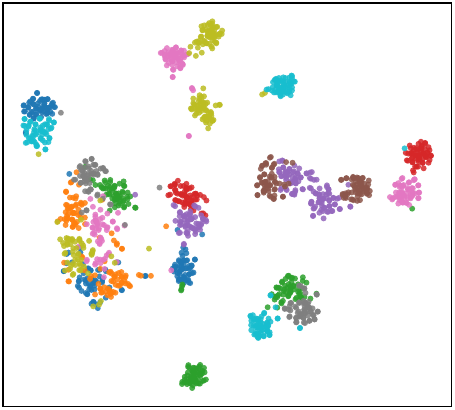}%
	}
	\hfil
	\subfloat[Anchor constraints\label{fig:isolet1_anchor}]{%
		\includegraphics[width=0.33\columnwidth]{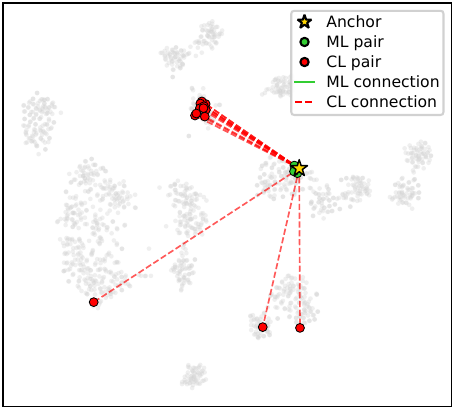}%
	}
    \hfil
	\subfloat[Constructed sets\label{fig:isolet1_tsne_sets}]{%
		\includegraphics[width=0.33\columnwidth]{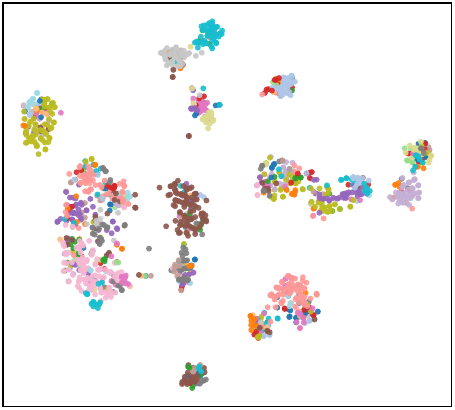}%
	}
	
	\caption{Visualization of learned set-aware representations on Isolet1. 
		(a) t-SNE colored by ground truth labels. 
		(b) Local constraint view around an anchor sample, with ML pairs in green and CL pairs in red. 
		(c) t-SNE colored by constructed sets.}
	\label{fig:representation_visualization}
\end{figure}

\begin{figure}[t]
	\centering
	\subfloat[Original feature space\label{fig:isolet1_orig_cdf}]{%
		\includegraphics[width=0.48\columnwidth]{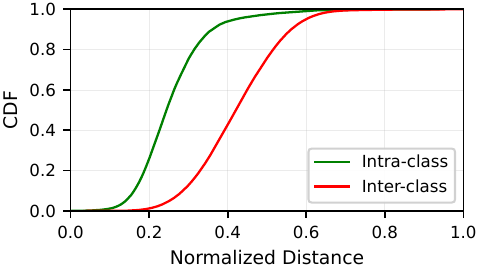}%
	}
	\hfil
	\subfloat[Learned embedding space\label{fig:isolet1_emb_cdf}]{%
		\includegraphics[width=0.48\columnwidth]{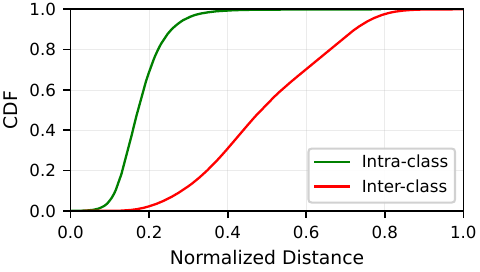}%
	}
	
	\caption{CDFs of normalized intra-class and inter-class distances on Isolet1.
(a) Original feature space.
(b) Learned hyperbolic embedding space after SARL.}
	\label{fig:distance_distribution}
\end{figure}

\begin{figure}[t]
	\subfloat[Isolet1\label{fig:isolet1_dp_curve}]{%
		\includegraphics[width=0.24\textwidth]{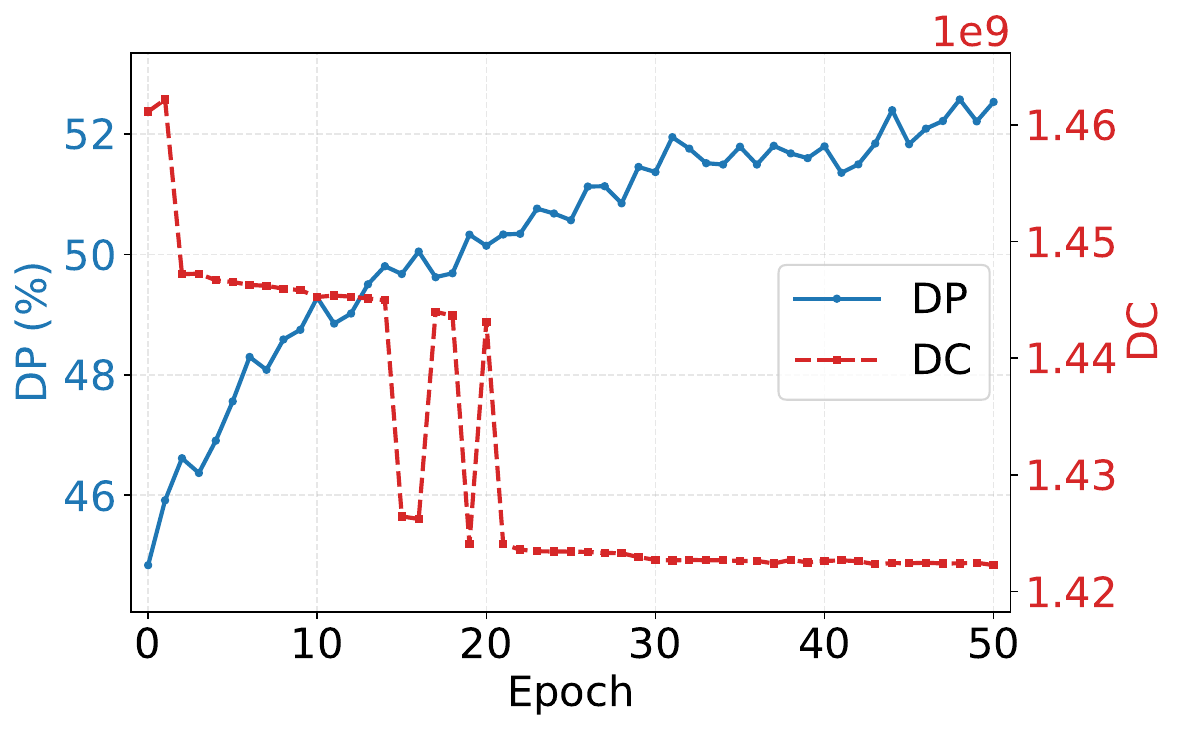}%
	}
	\hfill
	\subfloat[Australian\label{fig:Australian_dp_curve}]{%
		\includegraphics[width=0.24\textwidth]{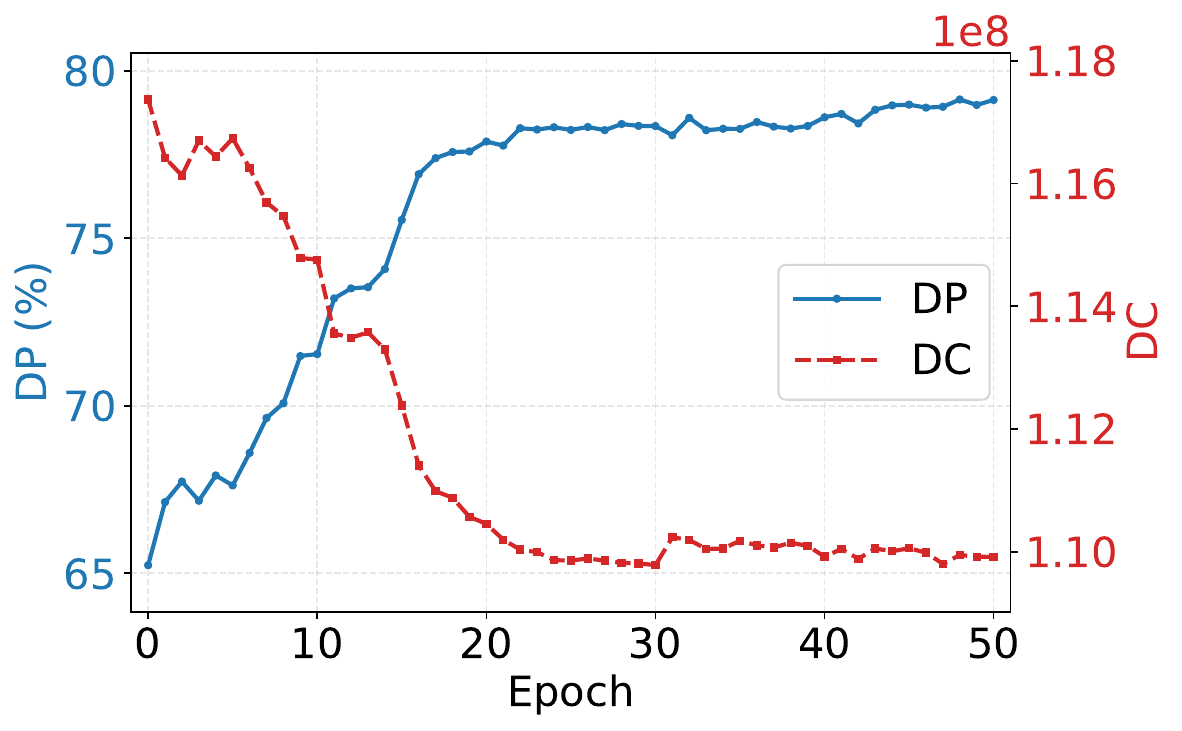}%
	}
	\caption{Evolution of DP and DC during set-based HypHC training. The initial values are computed with the embeddings from the hyperbolic autoencoder.}
	\label{DP_cost}
\end{figure}

\begin{figure}[t]
	\centering
	\subfloat[Australian]{%
		\includegraphics[
		width=0.4\columnwidth,
		trim=0 0 0 30pt,
		clip
		]{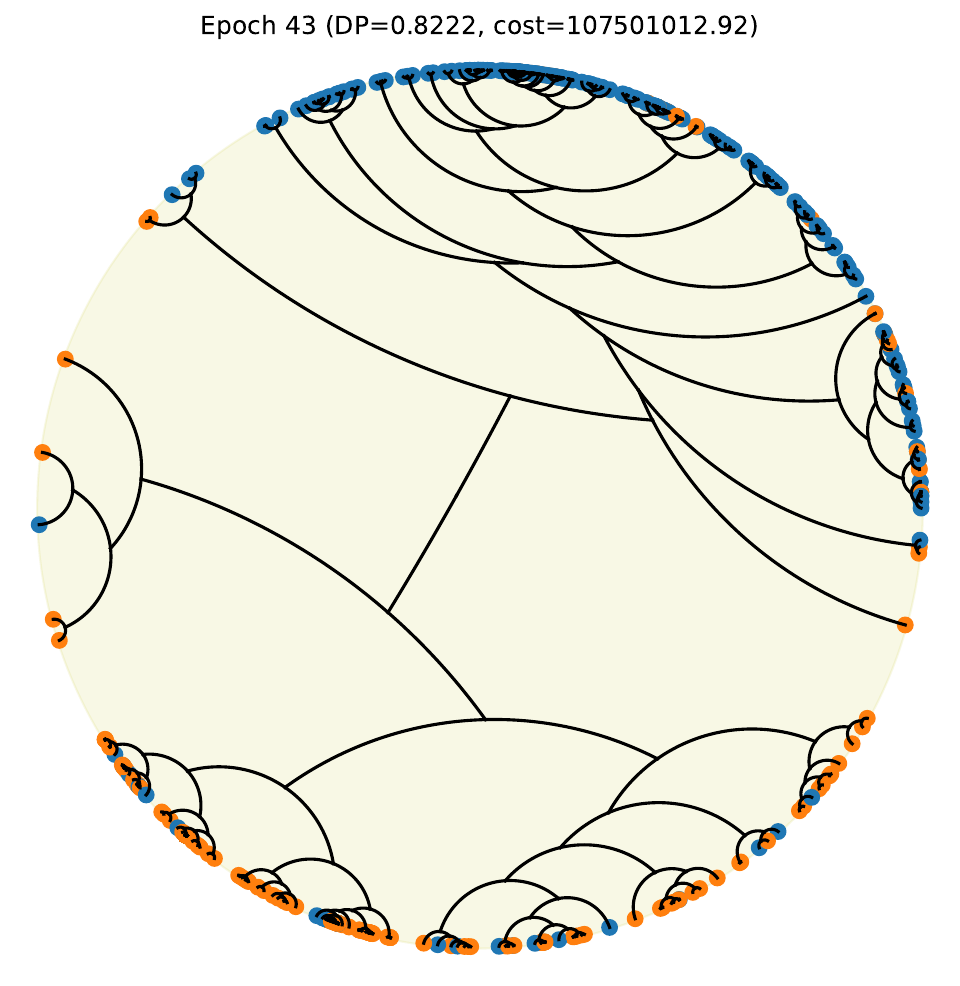}%
	}
	\hfill
	\subfloat[Wine]{%
		\includegraphics[
		width=0.4\columnwidth,
		trim=0 0 0 33pt,
		clip
		]{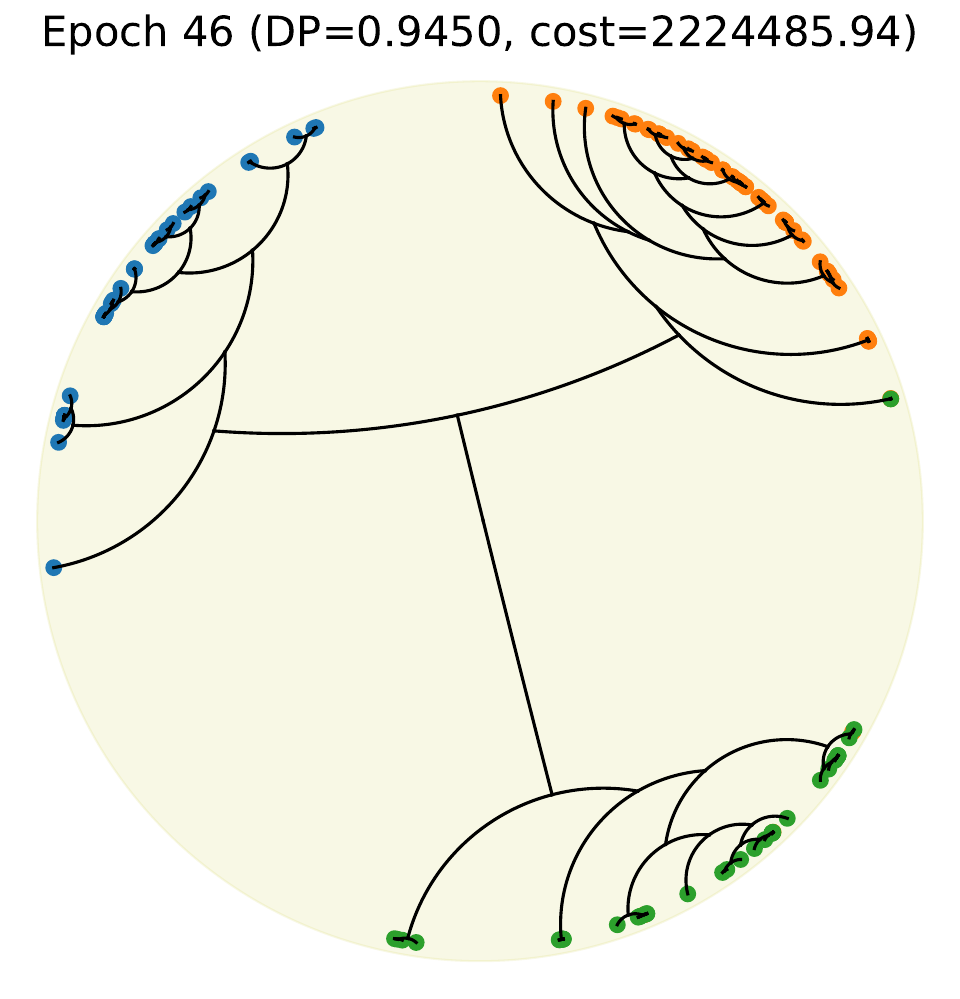}%
	}
	\caption{Visualization of decoded hierarchical trees after Set-HypHC optimization on Australian and Wine. The tree structures are visualized in the Poincar\'e model, with leaf nodes colored by ground-truth labels.}
	\label{fig:tree_evolution}
\end{figure}

\begin{figure*}[t]
	\centering
	\subfloat[Yale\label{fig:sensitivity_constraint_yale}]{%
		\includegraphics[width=0.24\textwidth]{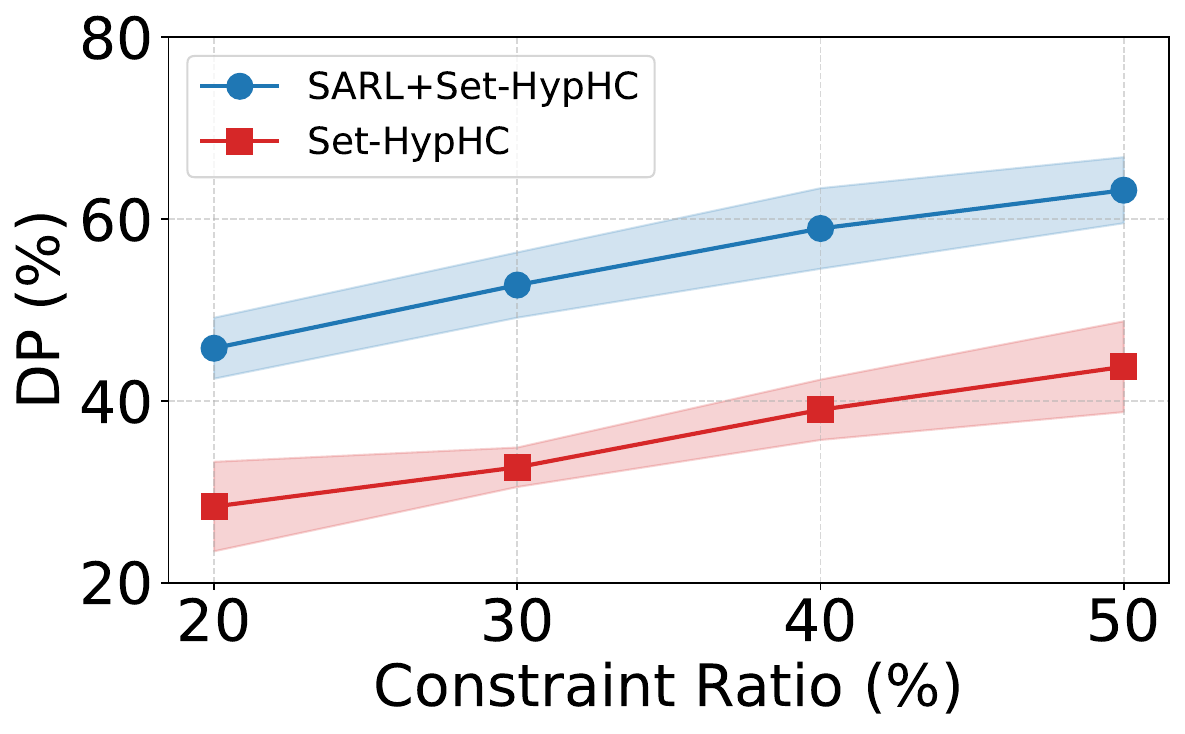}%
	}
	\hfill
	\subfloat[ORL\label{fig:sensitivity_constraint_orl}]{%
		\includegraphics[width=0.24\textwidth]{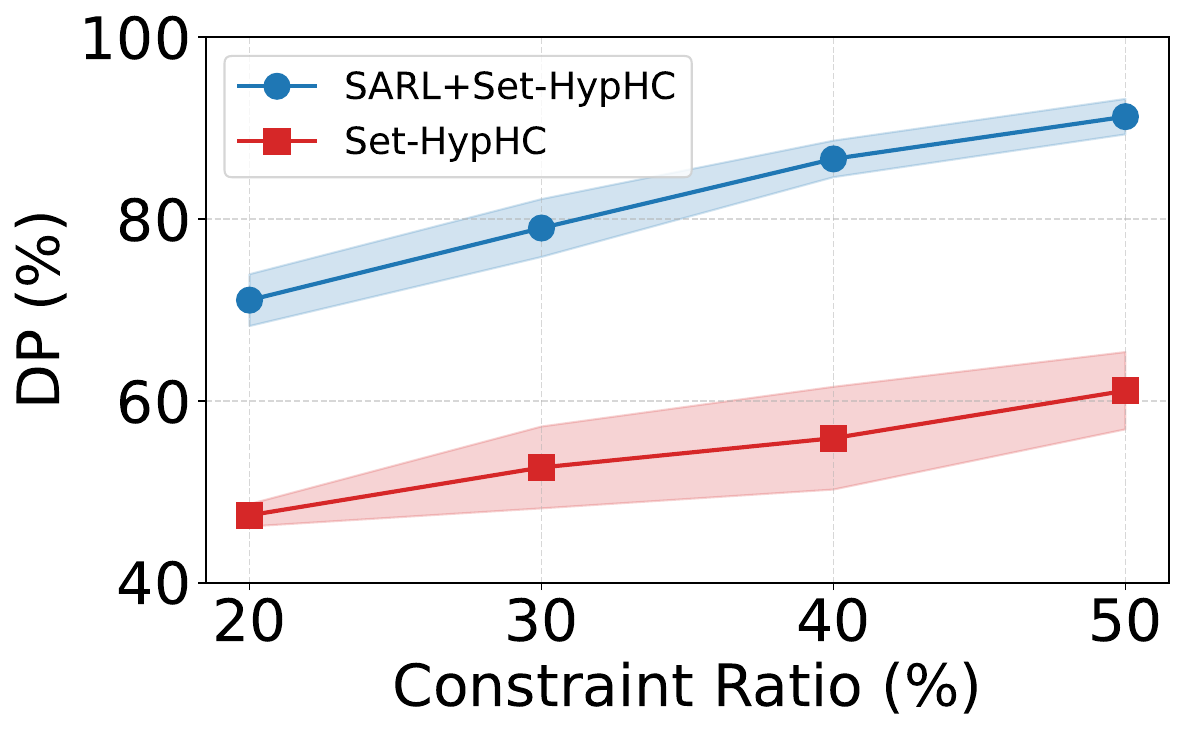}%
	}
	\hfill
	\subfloat[Isolet1\label{fig:sensitivity_constraint_isolet1}]{%
		\includegraphics[width=0.24\textwidth]{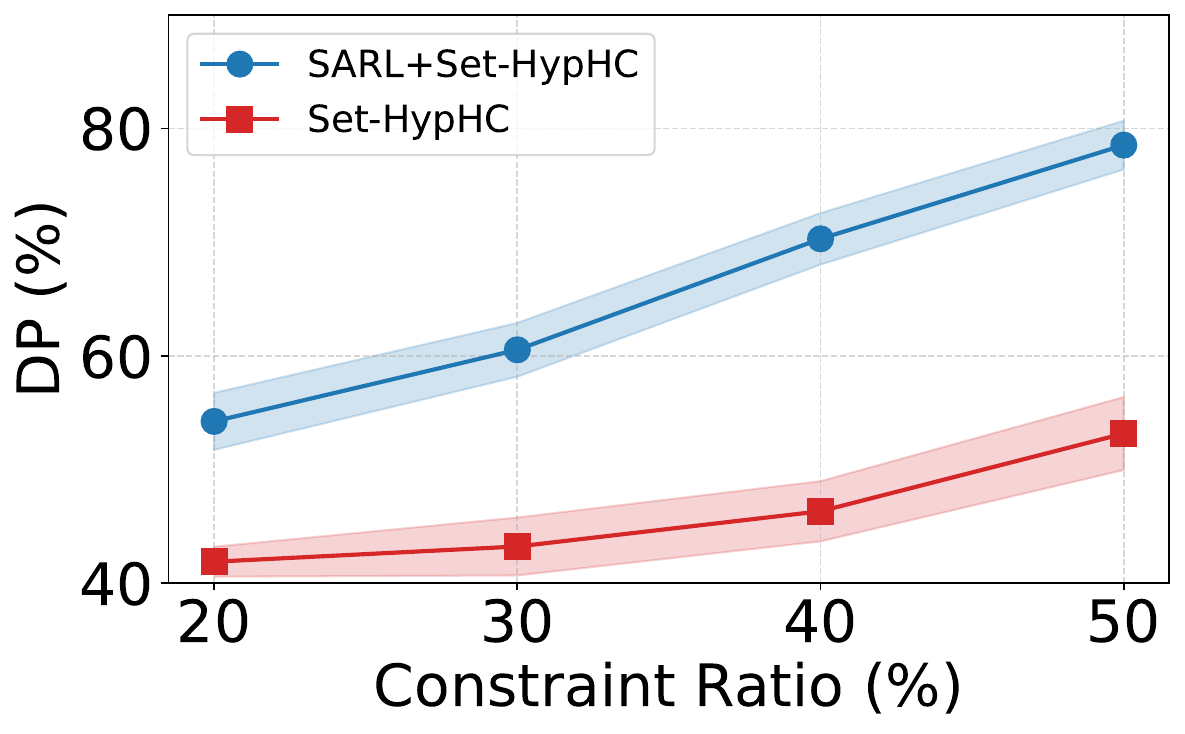}%
	}
	\hfill
	\subfloat[USPS\label{fig:sensitivity_constraint_usps}]{%
		\includegraphics[width=0.24\textwidth]{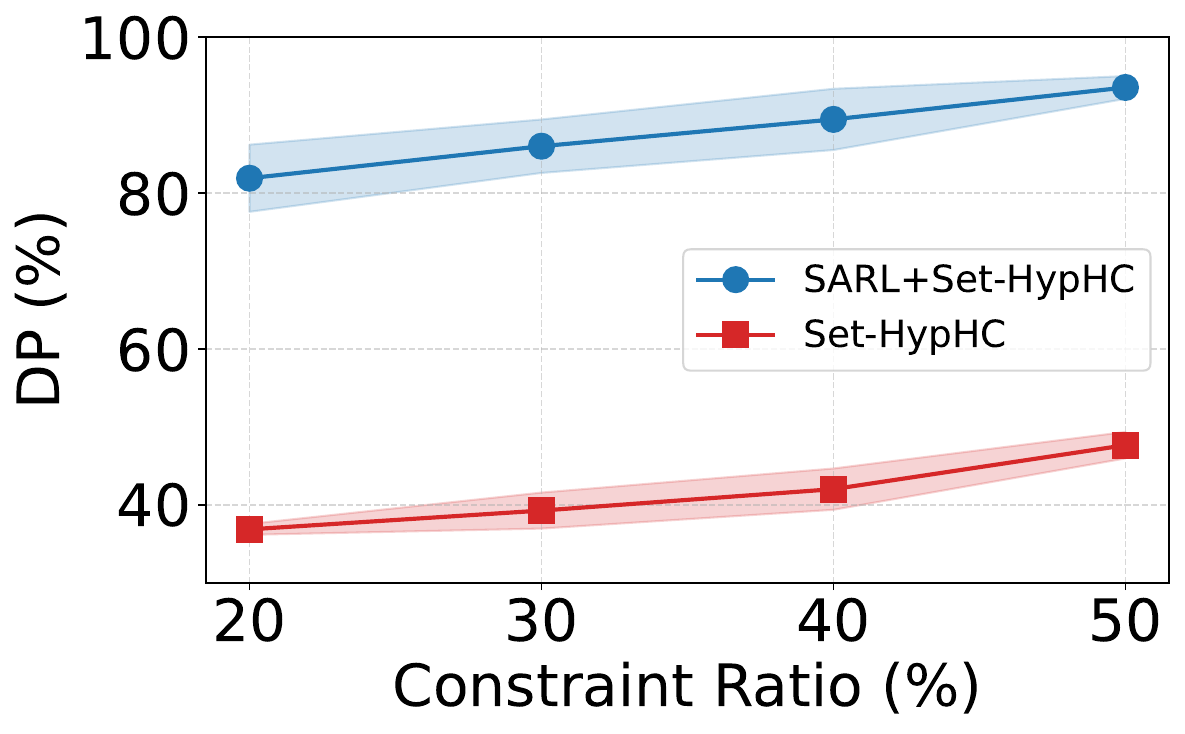}%
	}
	\caption{Performance of our method with different pairwise constraint amounts, which are controlled by constraint ratios from $\{20\%,30\%,40\%,50\%\}$.}
	\label{fig:sensitivity_constraint}
\end{figure*}

\begin{figure*}[!t]
	\centering
	
	\subfloat[Yale\label{fig:sensitivity_w_yale}]{%
		\includegraphics[width=0.22\textwidth]{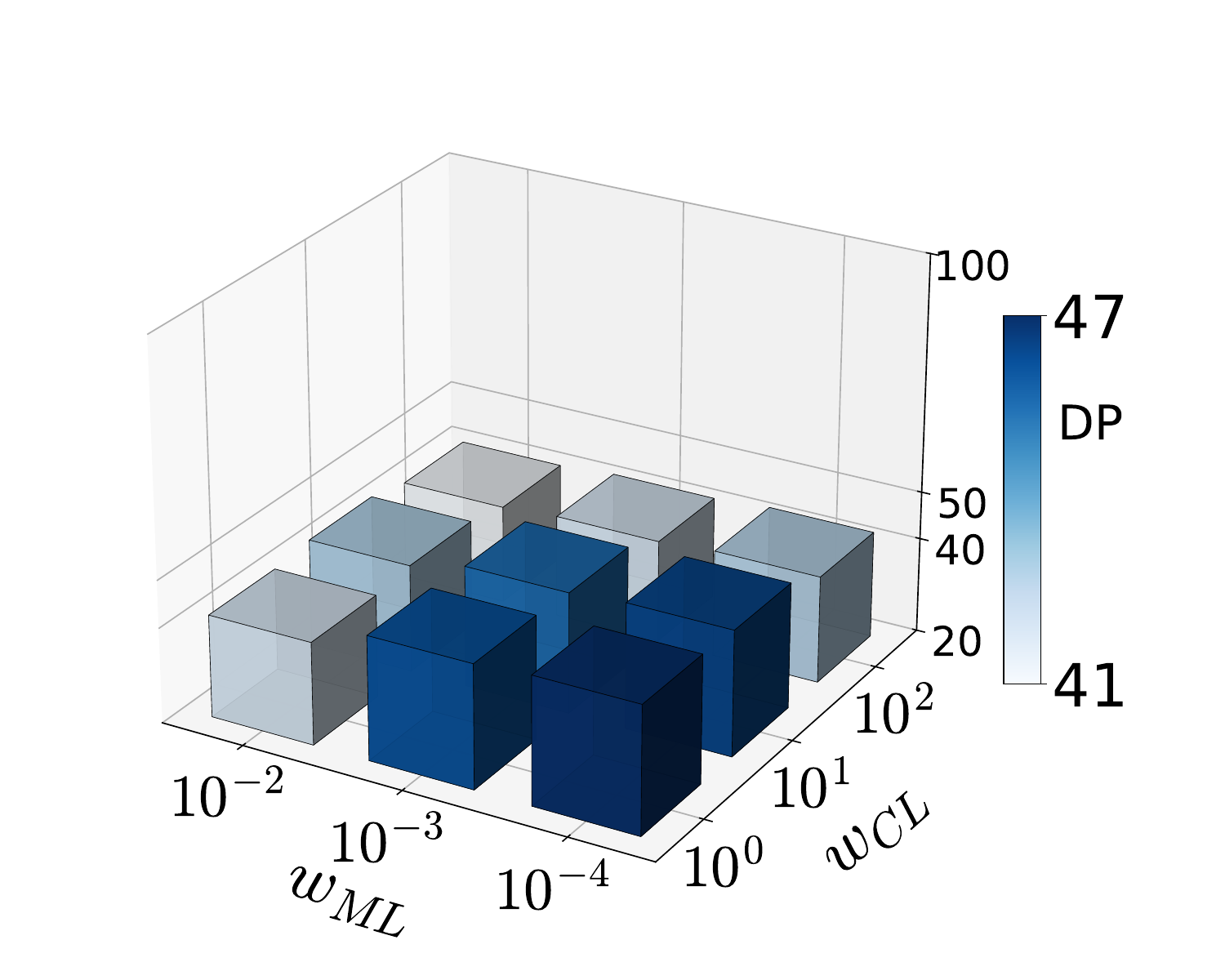}
	}
	\hspace{0.01\textwidth}
	\subfloat[Australian\label{fig:sensitivity_w_australian}]{%
		\includegraphics[width=0.22\textwidth]{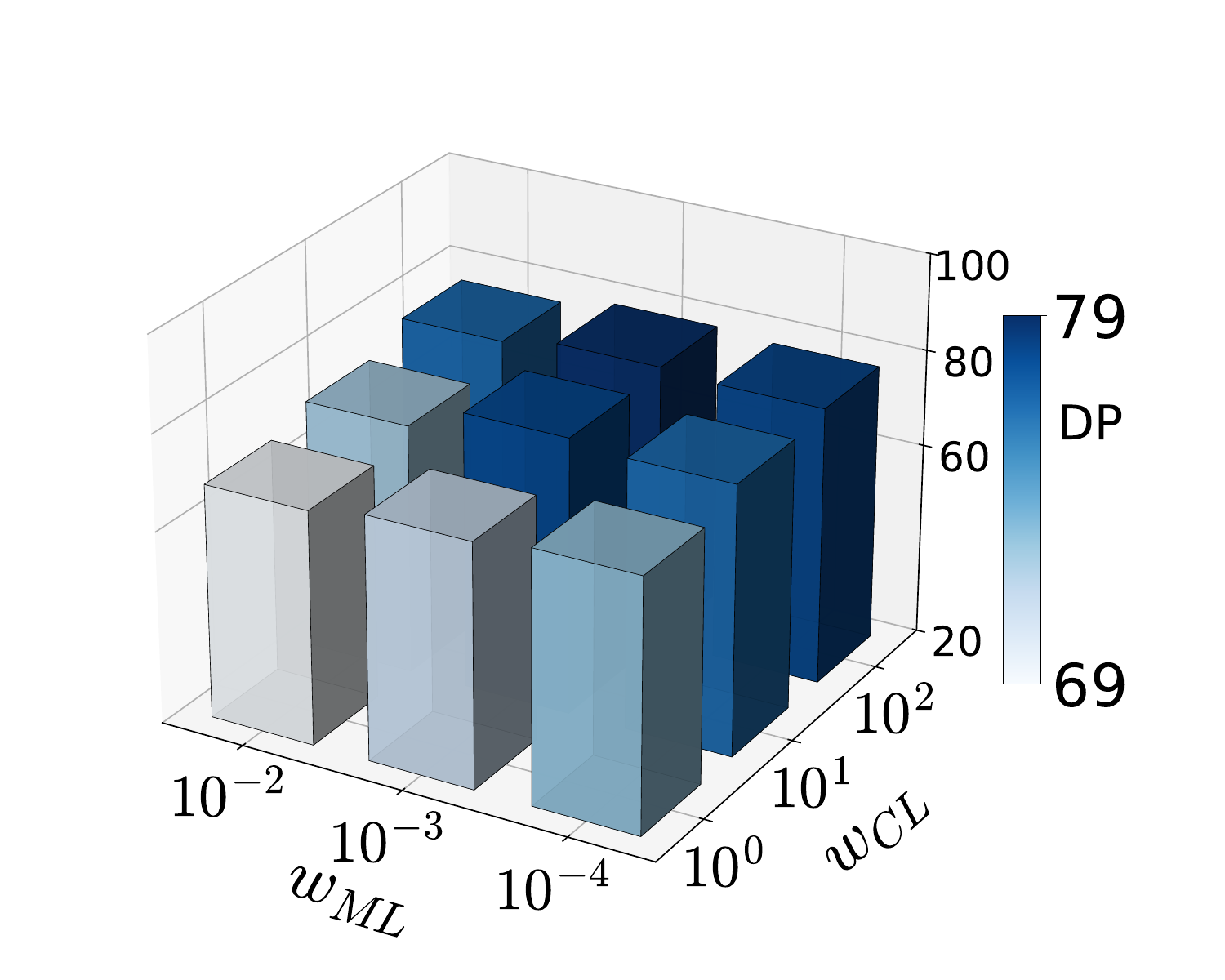}
	}
	\hspace{0.01\textwidth}
	\subfloat[Isolet1\label{fig:sensitivity_w_isolet1}]{%
		\includegraphics[width=0.22\textwidth]{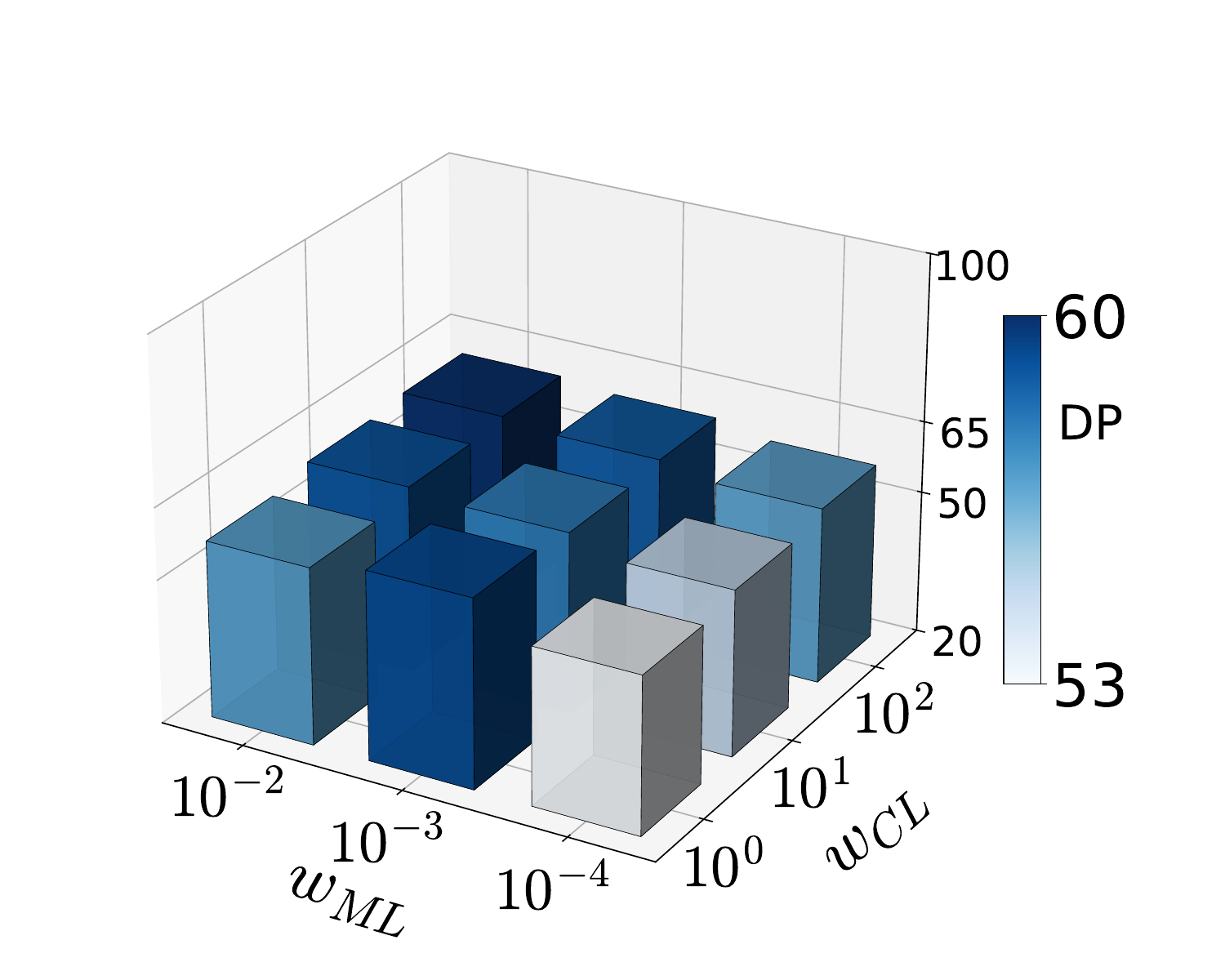}
	}
	\hspace{0.01\textwidth}
	\subfloat[Spambase\label{fig:sensitivity_w_Spambase}]{%
		\includegraphics[width=0.22\textwidth]{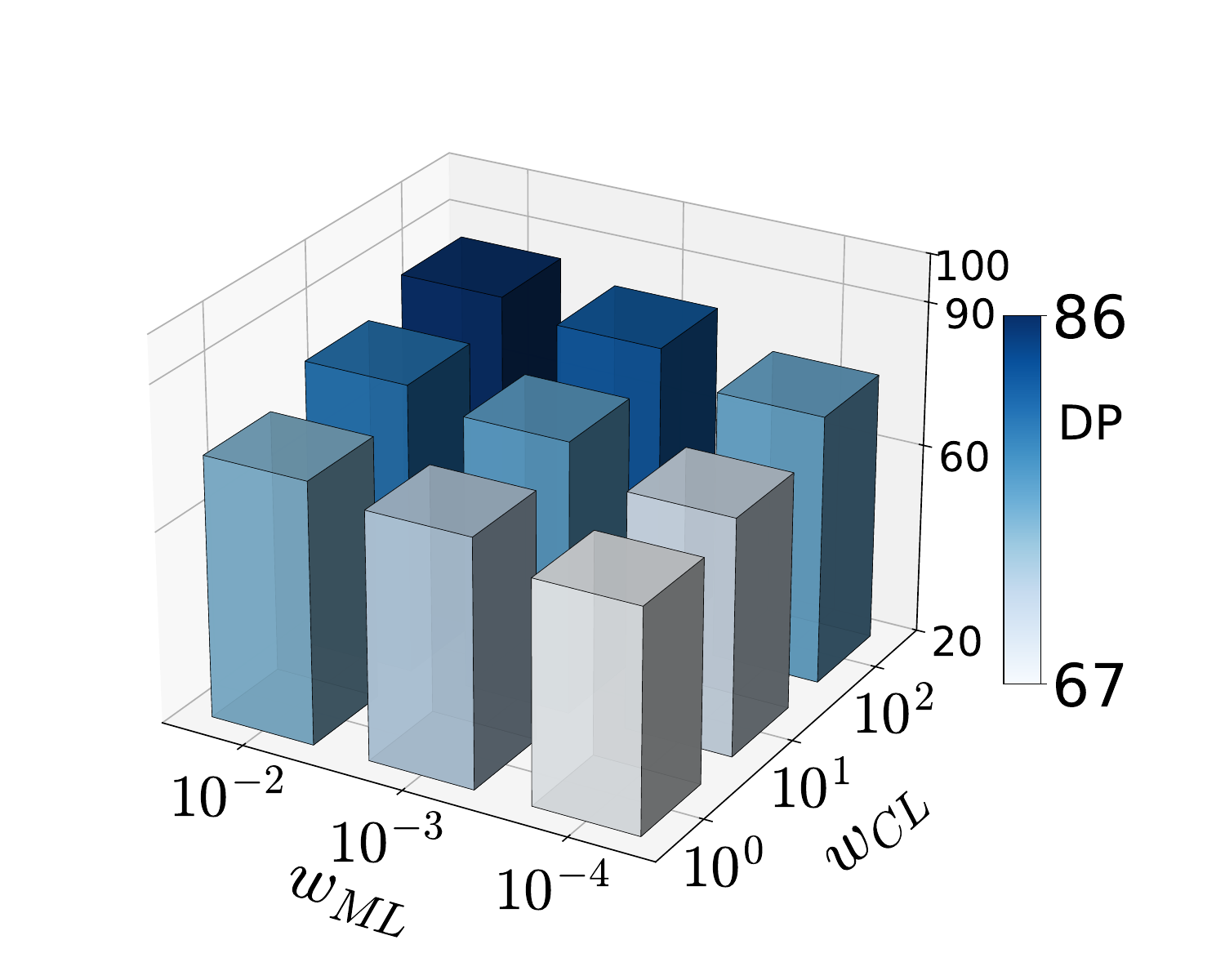}
	}
	
	\vspace{1mm}
	
	\subfloat[Yale\label{fig:sensitivity_k_yale}]{%
		\includegraphics[width=0.18\textwidth]{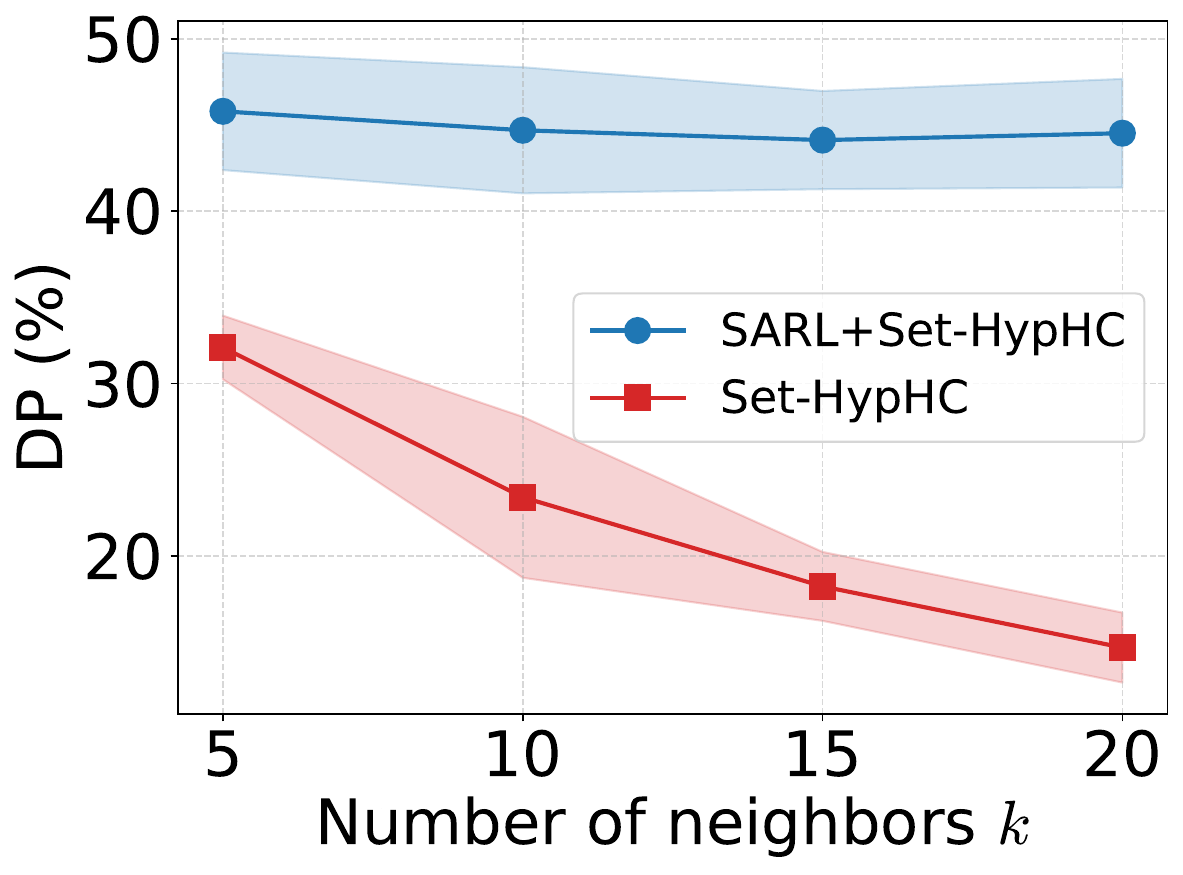}
	}
	\hspace{0.005\textwidth}
	\subfloat[Australian\label{fig:sensitivity_k_australian}]{%
		\includegraphics[width=0.18\textwidth]{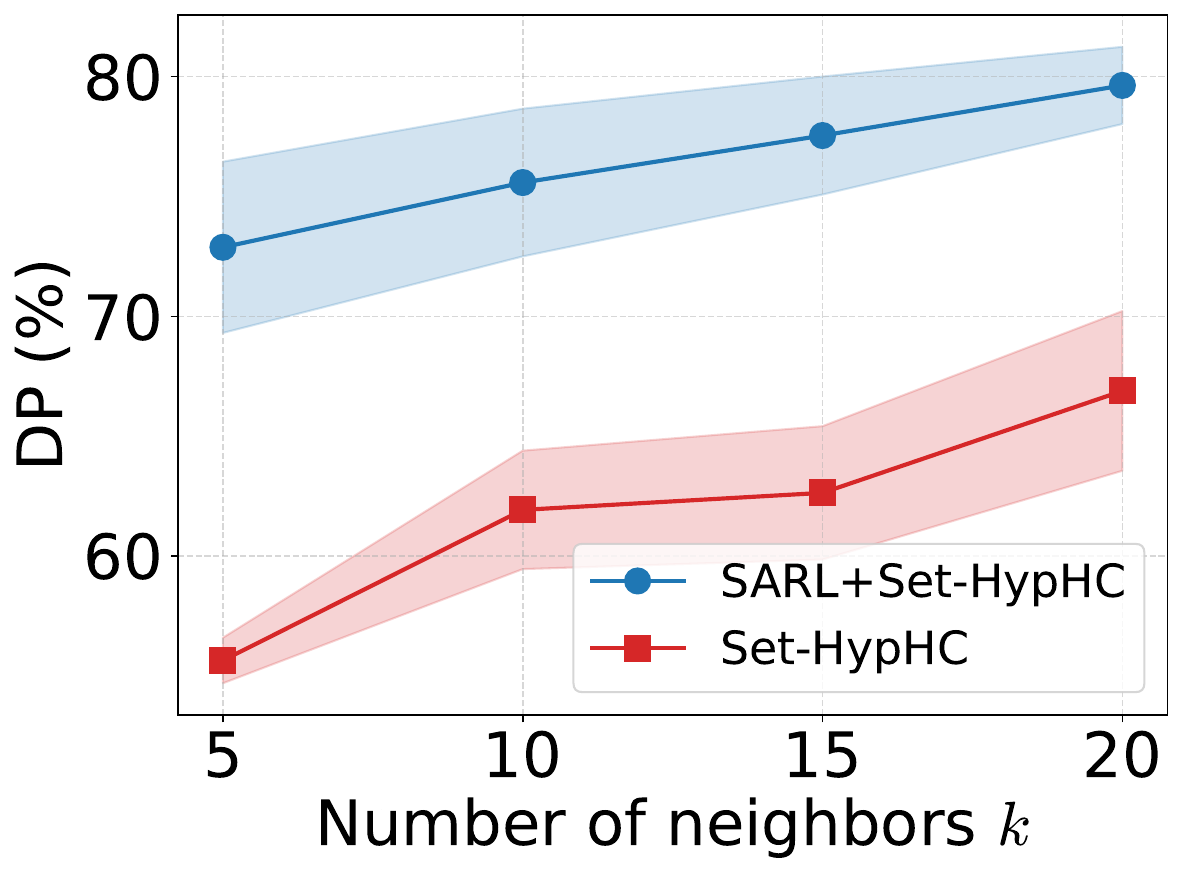}
	}
	\hspace{0.005\textwidth}
	\subfloat[Isolet1\label{fig:sensitivity_k_isolet1}]{%
		\includegraphics[width=0.18\textwidth]{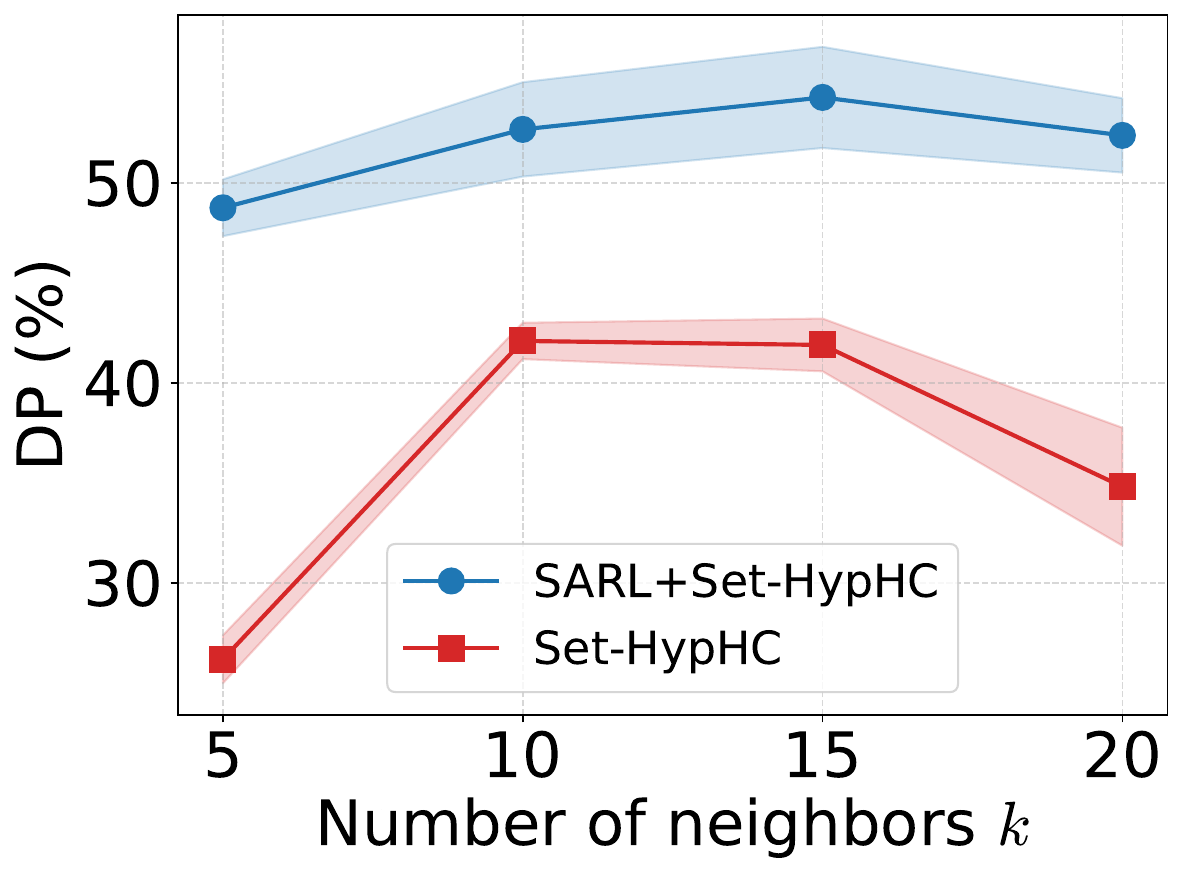}
	}
	\hspace{0.01\textwidth}
	\subfloat[Temperature $t$\label{fig:sensitivity_t}]{%
		\includegraphics[width=0.34\textwidth]{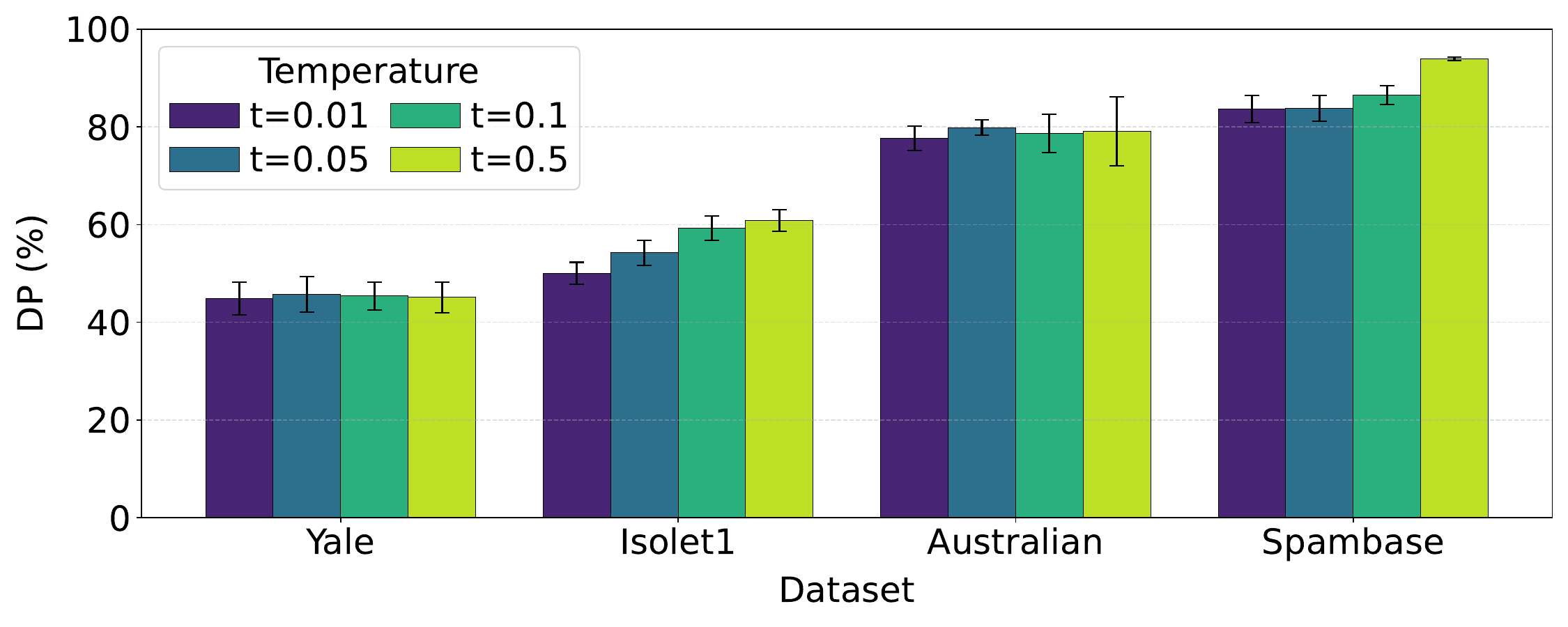}
	}
	
	\caption{Parameter sensitivity. (a)--(d) Balancing coefficients $w_{\mathrm{ML}}$ and $w_{\mathrm{CL}}$. (e)--(g) Number of neighbors $k$. (h) Temperature $t$.}
	\label{fig:parameter_sensitivity}
\end{figure*}

\subsubsection{Analysis of Set-Based HypHC}

We next analyze how Set-HypHC refines the hierarchy. Fig.~\ref{DP_cost} shows the evolution of DP and DC on Isolet1 and Australian, where the initial values are computed from the hyperbolic autoencoder embeddings before Set-HypHC optimization. During training, DP generally increases and DC decreases, indicating that Set-HypHC improves both label consistency and similarity-based tree quality. This supports the effectiveness of optimizing subtree merge preferences through intra-set LCAs.

Fig.~\ref{fig:tree_evolution} visualizes the decoded trees after optimization on Australian and Wine. Samples with the same ground-truth label are largely organized into coherent local branches, showing that the optimized hierarchy captures label-consistent subtree structures. The decoded trees also exhibit a clear multi-level organization in the Poincar\'e model, where different class-related subtrees are separated with balanced spatial layouts rather than being collapsed into a few crowded regions.

\subsubsection{Effect of Constraint Ratio}

We further study how the amount of pairwise supervision affects the proposed framework. As shown in Fig.~\ref{fig:sensitivity_constraint}, increasing the constraint ratio improves Set-HypHC in both settings, with and without SARL. Since the setting without SARL does not modify the original similarity structure, this improvement indicates that richer constraints provide more reliable constraint-induced sets and inter-set similarities.

Table~\ref{tab:set_purity} further reports the weighted purity and number of non-singleton sets in the case with SARL. For non-singleton sets, the weighted purity is computed as the size-weighted average of set purity, where the purity of each set is defined by the proportion of samples from its dominant class. The weighted purity generally increases as the constraint ratio grows, indicating that additional supervision improves the class consistency of the constructed sets. In contrast, the number of non-singleton sets tends to decrease or fluctuate across datasets, suggesting that the performance gain in Fig.~\ref{fig:sensitivity_constraint} mainly comes from more reliable local structural units rather than from simply absorbing more samples into non-singleton sets.

\begin{table}[t]
	\centering
	\caption{Weighted purity (\%) and number of non-singleton sets.}
	\label{tab:set_purity}
	\begin{tabular}{lcccc}
		\toprule
		\textbf{CR} & \textbf{Yale} & \textbf{ORL} & \textbf{Isolet1} & \textbf{USPS}\\
		\midrule
		20\% & 69.75\% (35) & 89.18\% (86) & 84.80\% (269) & 96.62\% (1754) \\
		30\% & 77.30\% (34) & 93.75\% (83) & 91.76\% (306) & 96.31\% (1768) \\
		40\% & 82.93\% (32) & 96.22\% (86) & 93.91\% (282) & 98.12\% (1643) \\
		50\% & 86.06\% (33) & 95.97\% (75) & 94.74\% (235) & 99.09\% (1404) \\
		\bottomrule
	\end{tabular}
	
	\vspace{1.0mm}
	\begin{minipage}{0.96\columnwidth}
		\footnotesize
\emph{Note:} Parentheses indicate the number of non-singleton sets. Weighted purity is computed over non-singleton sets. CR denotes the constraint ratio.
	\end{minipage}
\end{table}

\subsection{Parameter Sensitivity}

We further evaluate the sensitivity of the proposed method to the balancing coefficients, neighborhood size, and temperature coefficient. As shown in Figs.~\ref{fig:sensitivity_w_yale}--\ref{fig:sensitivity_w_Spambase}, the performance remains stable under most combinations of $w_{\mathrm{ML}}$ and $w_{\mathrm{CL}}$, indicating that the method does not rely on a narrowly tuned pair of loss weights. Figs.~\ref{fig:sensitivity_k_yale}--\ref{fig:sensitivity_k_isolet1} show that the variant without SARL is more sensitive to $k$, whereas the full pipeline remains more stable, suggesting that SARL improves the reliability of local neighborhood construction. Finally, Fig.~\ref{fig:sensitivity_t} shows that DP remains competitive across different temperatures, indicating that the set-level objective is not overly sensitive to $t$.

\section{Conclusion}

This paper proposed a semi-supervised hyperbolic HC framework with set-level structural priors. The proposed method bridges leaf-level pairwise supervision and subtree-level hierarchy optimization by constructing constraint-induced sets as soft carriers of structural guidance, rather than fixed clusters or predetermined subtrees. It first learns constraint-consistent hyperbolic embeddings through SARL, then constructs sets and computes inter-set similarities from pairwise constraints and the learned similarity structure, and finally optimizes set-level triplet relations through Set-HypHC on the Poincar\'e model. Experiments on benchmark datasets show that the proposed framework consistently outperforms representative semi-supervised and unsupervised baselines. Further analyses confirm that the proposed stages jointly improve label consistency and similarity-based tree quality by providing reliable set-level structural priors.

\bibliographystyle{IEEEtran}
\bibliography{refs}

\end{document}